\documentclass{article}

\usepackage[utf8]{inputenc} % allow utf-8 input
\usepackage[T1]{fontenc}    % use 8-bit T1 fonts
\usepackage{hyperref}       % hyperlinks
\usepackage{url}            % simple URL typesetting
\usepackage{booktabs}       % professional-quality tables
\usepackage{amsfonts}       % blackboard math symbols
\usepackage{microtype}      % microtypography
\usepackage[square,numbers,sort&compress]{natbib}

% extra packages
\usepackage{graphicx}
\usepackage{mathtools}
\usepackage{amssymb}
\usepackage{amsthm}
\usepackage{paralist}
\usepackage{subcaption}
\usepackage{titlesec}
\usepackage[dvipsnames]{xcolor}
\usepackage[nameinlink,capitalise]{cleveref}
\usepackage[acronym,smallcaps,nowarn,section,nogroupskip,nonumberlist]{glossaries}
\usepackage{todonotes}
\usepackage{thmtools}
\usepackage{thm-restate}
\usepackage{tikz}
% \usepackage[globalcitecopy]{bibunits}

%% Colours
\definecolor{red}{HTML}{E41A1C}
\definecolor{orange}{HTML}{FF7F00}
\definecolor{yellow}{HTML}{FFC020}
\definecolor{green}{HTML}{4DAF4A}
\definecolor{blue}{HTML}{377EB8}
\definecolor{purple}{HTML}{984EA3}
% graphics
\graphicspath{{images/}} % Location of the graphics files

% bib
% \defaultbibliographystyle{plainnat}
% hyperref
\hypersetup{%
  colorlinks,
  linkcolor={violet!70!black},
  citecolor={YellowOrange!70!black},
  urlcolor={Aquamarine!70!black}
}
% tikz
\usetikzlibrary{calc,backgrounds,positioning,bayesnet,decorations.pathmorphing,patterns,fit,hobby}
% theorem
\theoremstyle{plain}

\theoremstyle{remark}

\theoremstyle{assumption}

% cleveref
\Crefname{algocf}{Algorithm}{Algorithms}
\crefname{algorithm}{Algorithm}{Algorithms}
% \crefname{equation}{Equation}{Equations}
\crefname{figure}{Figure}{Figure}
\crefname{section}{\S}{\S\S}
\Crefname{section}{\S}{\S\S}
\crefformat{equation}{(#2#1#3)}
\crefformat{enumi}{(#2#1#3)}
% glossaries
\glsdisablehyper{}
\newacronym{VAE}{vae}{variational autoencoder}
\newacronym{AE}{ae}{auto-encoder}
\newacronym{RBM}{rbm}{restricted Boltzmann machine}
\newacronym{KL}{kl}{Kullback-Leibler}
\newacronym{SGD}{sgd}{stochastic gradient descent}
\newacronym{SGA}{sga}{stochastic gradient ascent}
\newacronym{ELBO}{elbo}{evidence lower bound}
\newacronym[sort=beta]{BVAE}{\(\beta\)-vae}{}
\newacronym{TC}{tc}{total correlation}
\newacronym{MMD}{mmd}{maximum mean discrepancy}
\newacronym{GAN}{gan}{generative adversarial network}
\newacronym{DCGAN}{dcgan}{deep convolutional generative adversarial network}
% todonotes
\presetkeys{todonotes}{%
  backgroundcolor=blue!10!white,
  linecolor=blue!10!white,
  bordercolor=blue!10!white
}{}

% Custom Commands

\setdefaultleftmargin{4ex}{2ex}{}{}{}{}

% http://tex.stackexchange.com/questions/112580/rescaling-a-math-symbol
\usepackage{scalerel}
\usepackage{mleftright}
\usepackage{dsfont}
\usepackage{bm}

% -- operators
\DeclareMathOperator{\E}{{}\mathbb{E}}

\DeclareMathOperator{\DKL}{\scalebox{0.95}{\text{KL}}}
\DeclareMathOperator*{\argmin}{arg\,min} % * allows typesetting beneath
\DeclareMathOperator*{\argmax}{arg\,max} % * allows typesetting beneath

% -- helpers

\renewcommand{\to}{\ensuremath{\rightarrow}}              % add ensure math

\newcommand{\given}{\mid}
          % what is this renewing?
\newcommand{\grpP}[1]{\ensuremath{\mleft( #1 \mright)}}   % parens   ()
 % braces   {}
\newcommand{\grpS}[1]{\ensuremath{\mleft[ #1 \mright]}}   % brackets []
\newcommand{\fnP}[2]{\ensuremath{#1 \grpP{#2}}}           % parens   ()
           % braces   {}
\newcommand{\fnS}[2]{\ensuremath{#1 \grpS{#2}}}           % brackets []
\newcommand{\KL}[2]{\fnP{\DKL}{#1 \,\|\; #2}}             % kl divergence
\newcommand{\Ex}[2][]{\fnS{\E_{#1}}{#2}}                  % expectation
\newcommand{\dmt}[1]{\lvert #1 \rvert}

% -- special
\newcommand{\p}[1]{\fnP{p_{{\theta}}}{#1}}
\newcommand{\q}[1]{\fnP{q_{{\phi}}}{#1}}

\newcommand{\gn}[1]{\fnP{\pi_{\theta,\beta}}{#1}}

\newcommand{\x}{\bm{x}}
\newcommand{\z}{\bm{z}}
\newcommand{\y}{\bm{y}}
\newcommand{\pz}{\fnP{p}{\z}}  % {\p{\z}}
\newcommand{\fz}{f_{\beta}(\z)}
\newcommand{\Fz}{F_{\beta}}

\newcommand{\R}{\mathbb{R}}

\renewcommand{\L}{\ensuremath{\mathcal{L}}}

%%% Local Variables:
%%% mode: latex
%%% TeX-master: "main"
%%% End:
             % import extra maths
\newcommand\numberthis{\addtocounter{equation}{1}\tag{\theequation}}

\long\def\remark#1{% for notes in the margin -- from Norman Ramsey
	\ifvmode\else
	\unskip\raisebox{-4.5pt}[0pt][0pt]{\rlap{$\scriptstyle\diamond$}}%
	\fi
	\setlength\marginparwidth{1.5cm}
	\marginpar{\raggedright\hbadness=10000
		\parindent=8pt \parskip=2pt
		\def\baselinestretch{0.8}\tiny
		\itshape\noindent #1\par}}
	
\usepackage[accepted]{icml2019}
\icmltitlerunning{Disentangling Disentanglement in Variational Autoencoders}

\begin{document}

\twocolumn[
\icmltitle{Disentangling Disentanglement in Variational Autoencoders}
\icmlsetsymbol{equal}{*}

\begin{icmlauthorlist}
  \icmlauthor{Emile Mathieu}{equal,oxstats}
  \icmlauthor{Tom Rainforth}{equal,oxstats}
  \icmlauthor{N. Siddharth}{equal,oxeng}
  \icmlauthor{Yee Whye Teh}{oxstats}
\end{icmlauthorlist}

\icmlaffiliation{oxstats}{Department of Statistics}  % , University of Oxford, United Kingdom
\icmlaffiliation{oxeng}{Department of Engineering, University of Oxford}  % , United Kingdom

\icmlcorrespondingauthor{Emile Mathieu}{emile.mathieu@stats.ox.ac.uk}
\icmlcorrespondingauthor{Tom Rainforth}{rainforth@stats.ox.ac.uk}
\icmlcorrespondingauthor{N. Siddharth}{nsid@robots.ox.ac.uk}

\icmlkeywords{variational auto-encoders, disentanglement, representation learning, graphical models}

\vskip 0.3in
]

% this must go after the closing bracket ] following \twocolumn[ ...

% This command actually creates the footnote in the first column
% listing the affiliations and the copyright notice.
% The command takes one argument, which is text to display at the start of the footnote.
% The \icmlEqualContribution command is standard text for equal contribution.
% Remove it (just {}) if you do not need this facility.

%\printAffiliationsAndNotice{}  % leave blank if no need to mention equal contribution
\printAffiliationsAndNotice{\icmlEqualContribution} % otherwise use the standard text.

%\maketitle
%\setcounter{footnote}{0}        % reset numbering to handle \thanks properly

\begin{abstract}
  We develop a generalisation of disentanglement in \glspl{VAE}---\emph{decomposition} of the latent representation---characterising it as the fulfilment of two factors:
  \begin{inparaenum}[a)]
  \item the latent encodings of the data having an appropriate level of overlap, and
  \item the aggregate encoding of the data conforming to a desired structure, represented through the prior.
  \end{inparaenum}
  Decomposition permits disentanglement, i.e. explicit independence between latents, as a special case, but also allows for a much richer class of properties to be imposed on the learnt representation, such as sparsity, clustering, independent subspaces, or even intricate hierarchical dependency relationships.
  We show that the \acrshort{BVAE} varies from the standard \gls{VAE} predominantly in its control of latent overlap and that for the standard choice of an isotropic Gaussian prior, its objective is invariant to rotations of the latent representation.
  Viewed from the decomposition perspective, breaking this invariance with simple manipulations of the prior can yield better disentanglement with little or no detriment to reconstructions.
  We further demonstrate how other choices of prior can assist in producing different decompositions and introduce an alternative training objective that allows the control of both decomposition factors in a principled manner.
\end{abstract}

% !TEX root =  main.tex

\vspace*{-1.5\baselineskip}
\section{Introduction}
\label{sec:introduction}
An oft-stated motivation for learning disentangled representations of data with deep generative models is a desire to achieve interpretability~\citep{Bengio2013,Chen2016wm}---particularly the \emph{decomposability}~\citep[see \S 3.2.1 in][]{lipton2016mythos} of latent representations to admit intuitive explanations.
Most work has focused on capturing purely \emph{independent} factors of variation~\citep{Chen2016wm,DBLP:journals/corr/abs-1804-03599,Esmaeili2018up,Hyunjik2018,Ansari:2018tf,Xu:2018tl,alemi2016deep,chen2018isolating,higgins2016beta,eastwood2018a,ZhaoSE17b}, typically evaluating this using purpose-built, synthetic data~\citep{eastwood2018a,higgins2016beta,Hyunjik2018}, whose generative factors are independent by construction.

This conventional view of disentanglement, as recovering independence, has subsequently motivated the development of formal evaluation metrics for independence~\citep{eastwood2018a,Hyunjik2018}, which in turn has driven the development of objectives that target these metrics, often by employing regularisers explicitly encouraging independence in the representations~\citep{eastwood2018a,Hyunjik2018,Esmaeili2018up}.

We argue that such an approach is not generalisable, and potentially even harmful, to learning interpretable representations for more complicated problems, where such simplistic representations cannot accurately mimic the generation of high dimensional data from low dimensional latent spaces, and more richly structured dependencies are required.

We posit a generalisation of disentanglement in \glspl{VAE}---\emph{decomposing} their latent representations---that can help avoid such pitfalls.
We characterise decomposition in \glspl{VAE} as the fulfilment of two factors:
\begin{inparaenum}[a)]
\item the latent encodings of data having an appropriate level of overlap, and
\item the aggregate encoding of data conforming to a desired structure, represented through the prior.
\end{inparaenum}
We emphasize that neither of these factors is sufficient in isolation: without
an appropriate level of overlap, encodings can degrade to a lookup table
where the latents convey little information about data, and without the aggregate encoding of data following a desired structure, the encodings do not decompose as desired.

Disentanglement \emph{implicitly} makes a choice of decomposition: that the latent features are independent of one another.
We make this \emph{explicit} and exploit it to both provide improvement to disentanglement through judicious choices of structure in the prior, and to introduce a more general framework flexible enough to capture alternate, more complex, notions of decomposition such as sparsity, clustering, hierarchical structuring, or independent subspaces.

To connect our framework with existing approaches for encouraging disentanglement, we provide a theoretical analysis of the \acrshort{BVAE}~\citep{higgins2016beta,alemi2016deep,alemi2018fixing}, and show that it typically only allows control of latent overlap, the first decomposition factor.
We show that it can be interpreted, up to a constant offset, as the standard \gls{VAE} objective with its prior annealed as $\p{\z}^{\beta}$ and an additional maximum entropy regularization of the encoder that increases the stochasticity of the encodings.
Specialising this result for the typical choice of a Gaussian encoder and isotropic Gaussian prior indicates that the \acrshort{BVAE}, up to a scaling of the latent space, is equivalent to the \gls{VAE} plus a regulariser encouraging higher encoder variance.
Moreover, this objective is invariant to rotations of the learned latent representation, meaning that it does not, on its own, encourage the latent variables to take on meaningful representations any more than an arbitrary rotation of them.

We confirm these results empirically, while further using our decomposition framework to show that simple manipulations to the prior can improve disentanglement, and other decompositions, with little or no detriment to the reconstruction accuracy.
Further, motivated by our analysis, we propose an alternative objective that takes into account the distinct needs of the two factors of decomposition, and use it to learn clustered and sparse representations as demonstrations of alternative forms of decomposition.
An implementation of our experiments and suggested methods is provided at~\href{http://github.com/iffsid/disentangling-disentanglement}{http://github.com/iffsid/disentangling-disentanglement}.

%%% Local Variables:
%%% mode: latex
%%% TeX-master: "../main"
%%% End:

% LocalWords:  regularisers latents BVAE VAE stochasticity regulariser 4mm

%\vspace*{-1ex}
% !TEX root =  main.tex

\section{Background and Related Work}
\label{sec:background}

\subsection{Variational Autoencoders}

Let $\x$ be an $\mathcal{X}$-valued random variable distributed according to an unknown generative process with density $p_{\mathcal{D}}(\x)$ and from which we have observations, $X=\{\x_1, \dots, \x_n\}$.
The aim is to learn a latent-variable model $\p{\x,\z}$ that captures this generative process, comprising of a fixed\footnote{Learning the prior is possible, but omitted for simplicity.} prior over latents $p(\z)$ and a parametric likelihood $\p{\x | \z}$.
Learning proceeds by minimising a divergence between the true data generating distribution and the model w.r.t $\theta$, typically
\begin{align*}
  \argmin_{\bm{\theta}}
  \KL{p_\mathcal{D}(\x)}{\p{\x}}
  =
  \argmax_{\bm{\theta}}
  \Ex[p_\mathcal{D}(\x)]{\log \p{\x}}
\end{align*}
where \(\p{\x} = \int_{\mathcal{Z}} \p{\x|\z} p(\z) d\z\) is the marginal likelihood, or evidence, of datapoint~\(\x\) under the model, approximated by averaging over the observations.

However, estimating~\(\p{\x}\) (or its gradients) to any sufficient degree of accuracy is typically infeasible.
A common strategy to ameliorate this issue involves the introduction of a parametric inference model $\q{\z|\x}$ to construct a variational \gls{ELBO} on $\log \p{\x}$ as follows
\begin{align}
  \label{eq:elbo}
  \begin{split}
    \hspace*{-1ex}%
    \L(\x; \!\theta, \!\phi)
    \!&\triangleq\! \log \p{\x}-\KL{\q{\z|\x}}{\p{\z|\x}} \\
    \!&=\! \mathbb{E}_{\q{\z|\x}\!}[\log p_{{\theta}}(\x|\z)] \!-\! \KL{q_{{\phi}}(\z|\x)\!}{\!p(\z)\!}.\!\!\!
  \end{split}
\end{align}
A \acrfull{VAE}~\citep{KingmaW13,rezende2014stochastic} views this objective from the perspective of a deep stochastic autoencoder, taking the inference model $\q{\z|\x}$ to be an encoder and the likelihood model $\p{\x | \z}$ to be a decoder.
Here~\(\theta\) and~\(\phi\) are neural network parameters, and learning happens via \gls{SGA} using unbiased estimates of $\nabla_{\theta,\phi} \frac{1}{n} \sum_{i=1}^{n} \mathcal{L}(\x_i; {\theta}, {\phi})$.
Note that when clear from the context, we denote $\L(\x; \theta, \phi)$ as simply $\L(\x)$.

\subsection{Disentanglement}

Disentanglement, as typically employed in literature, refers to independence among features in a representation~\citep{Bengio2013,eastwood2018a,higgins2018towards}.
Conceptually, however, it has a long history, far longer than we could reasonably do justice here, and is far from specific to \glspl{VAE}.
The idea stems back to traditional methods such as ICA~\cite{yang1997adaptive,hyvarinen2000independent} and conventional autoencoders~\cite{schmidhuber1992learning}, through to a range of modern approaches employing deep learning~\cite{reed2014learning,makhzani2015adversarial,chen2016infogan,mathieu2016disentangling,achille19emergence,hjelm2018learning,cheung2014discovering}.

Of particular relevance to this work are approaches that explore disentanglement in the context of \glspl{VAE}~\cite{higgins2016beta,alemi2016deep,siddharth2017learning,Hyunjik2018,chen2018isolating,Esmaeili2018up}.
Here one aims to achieve independence between the dimensions of the aggregate encoding, typically defined as
\(
q_{\phi}(\z) \triangleq \E_{p_{\mathcal{D}}(\x)} \left[q(\z|\x)\right] \approx \frac{1}{n} \sum_{i}^n q(\z|\x_i)
\).
The significance of $q_{\phi}(\z)$ is that it is the marginal distribution induced on the latents by sampling a datapoint and then using the encoder to sample an encoding given that datapoint.  It can thus informally be thought of as the pushforward distribution for ``sampling'' representations in the latent space.

\begin{figure*}[t]
	\centering
	\includegraphics[width=.75\textwidth]{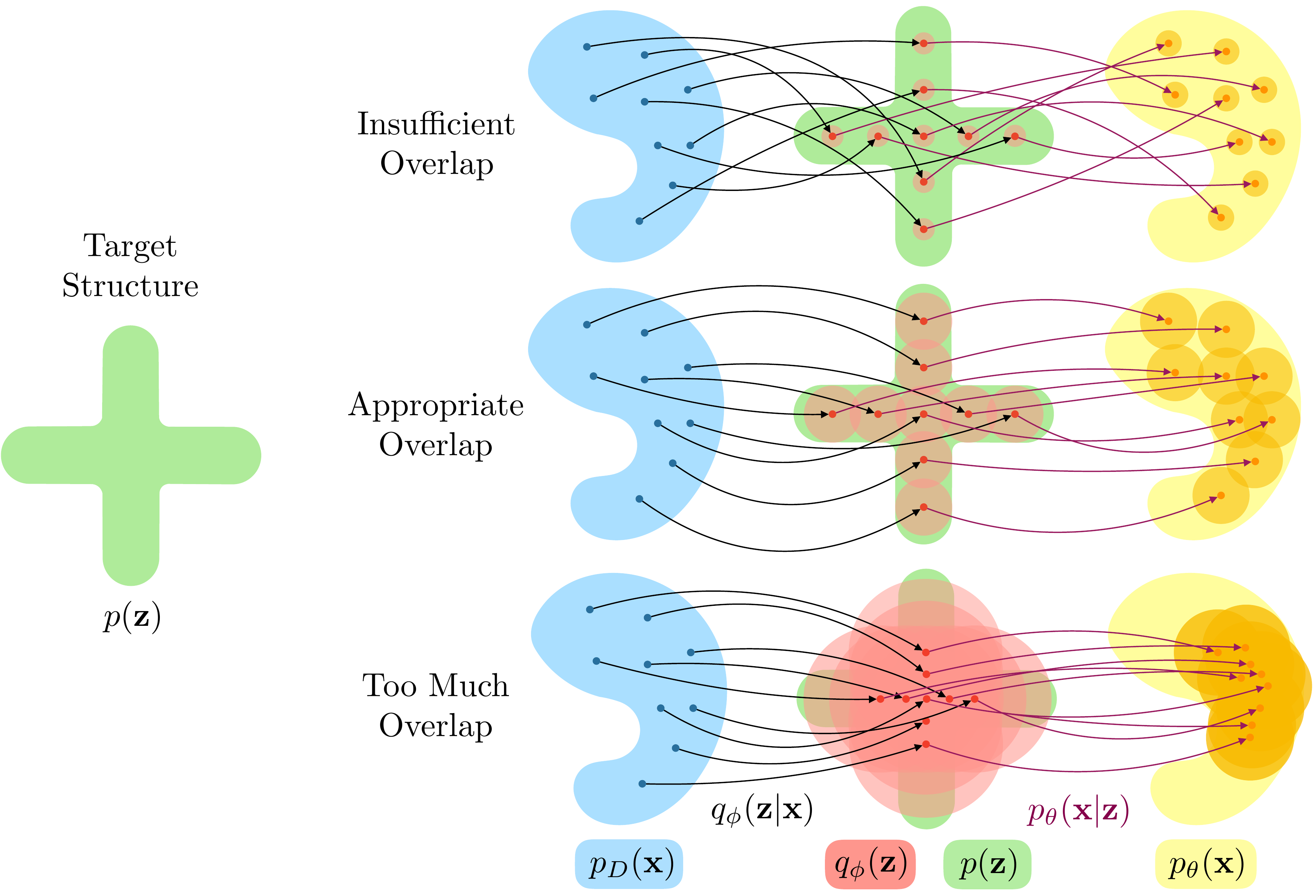}
	% \vspace{-2em}
	\caption{%
		Illustration of decomposition where the desired structure is a cross shape (enforcing \emph{sparsity}), expressed through the prior $p(\z)$ as shown on the left.
		In the scenario where there is insufficient overlap [top], we observe a lookup table behavior: points that are close in the data space are not close in the latent space and so the latent space loses meaning.
		In the scenario where there is too much overlap [bottom], the latent variable and observed datapoint convey little information about one another, such that the latent space again loses meaning.
		Note that if the distributional form of the latent distribution does not match that of the prior, as is the case here, this can also prevent the aggregate encoding matching the prior when the level of overlap is large.
			\label{fig:decomp-factors}
	}
	\vspace*{-1.8ex}
\end{figure*}

Within the disentangled \glspl{VAE} literature, there is also a distinction between unsupervised approaches, and semi-supervised approaches wherein one has access to the true generative factor values for some subset of data~\citep{kingma2014semi,siddharth2017learning,BouchacourtTN18}.
Our focus, however, is on the unsupervised setting.

Much of the prior work in the field has either implicitly or explicitly presumed a slightly more ambitious definition of disentanglement than considered above: that it is a measure of how well one captures \emph{true} factors of variation (which happen to be independent by construction for synthetic data), rather than just independent factors.
After all, if we wish for our learned representations to be interpretable, it is necessary for the latent variables to take on clear-cut meaning.

One such definition is given by \citet{eastwood2018a}, who define it as the extent to which a latent dimension~\(d \in D\) in a representation predicts a true generative factor~\(k \in K\), with each latent capturing at most one generative factor.
This implicitly assumes \(D \ge K\), as otherwise the latents are unable to explain all the true generative factors.
However, for real data, the association is more likely \(D \ll K\), with one learning a low-dimensional abstraction of a complex process involving many factors.
Consequently, such simplistic representations cannot, by definition,  be found for more complex datasets that require more richly structured dependencies to be able to encode the information required to generate higher dimensional data.
Moreover, for complex datasets involving a finite set of datapoints, it might not be reasonable to presume that one could capture the elements of the true generative process---the data itself might not contain sufficient information to recover these and even if it does, the computation required to achieve this through model learning is unlikely to be tractable.

The subsequent need for richly structured dependencies between latent dimensions has been reflected in the motivation for a handful of approaches~\citep{siddharth2017learning,BouchacourtTN18,Johnson:2016ud,Esmaeili2018up} that explore this through graphical models, although employing mutually-inconsistent, and not generalisable, interpretations of disentanglement.
This motivates our development of a decomposition framework as a means of extending beyond the limitations of disentanglement.

%%% Local Variables:
%%% mode: latex
%%% TeX-master: "../main"
%%% End:

% LocalWords:  VAE latents datapoint ELBO SGA ICA datapoints Autoencoders
% LocalWords:  autoencoder autoencoders pushforward

%\vspace*{-1ex}
% !TEX root =  main.tex

%\vspace{-3pt}
\section{Decomposition: A Generalisation of Disentanglement}
\label{sec:method}
%\vspace{-3pt}

The commonly assumed notion of disentanglement is quite restrictive for complex models where the true generative factors are not independent, very large in number, or where it cannot be reasonably assumed that there is a well-defined set of ``true'' generative factors (as will be the case for many, if not most, real datasets).
To this end, we introduce a generalization of disentanglement, \emph{decomposition}, which at a high-level can be thought of as imposing a desired structure on the learned representations.
This permits disentanglement as a special case, for which the desired structure is that \(\q{\z}\) factors along its dimensions.

We characterise the decomposition of latent spaces in \glspl{VAE} to be the fulfilment of two factors (as shown in \cref{fig:decomp-factors}):
%
%\vspace*{-0.5ex}
\begin{compactenum}[a.]
\item An ``appropriate'' level of overlap in the latent space---ensuring that the range of latent values capable of encoding a particular datapoint is neither too small, nor too large.
This is, in general, dictated by the level of stochasticity in the encoder: the noisier the encoding process is, the higher the number of datapoints which can plausibly give rise to a particular encoding.
  \label{item:overlap}
  \item The aggregate encoding $\q{\z}$ matching the prior $\pz$, where the latter expresses the desired dependency structure between latents.
  \label{item:reg}
\end{compactenum}
%\vspace*{-0.2ex}

The overlap factor~\cref{item:overlap} is perhaps best understood by considering extremes---too little, and the latents effectively become a lookup table; too much, and the data and latents do not convey information about each other.
In either case, meaningfulness of the latent encodings is lost.
Thus, without the \emph{appropriate} level of overlap---dictated both by noise in the true generative process and dataset size---it is not possible to enforce meaningful structure on the latent space.
Though quantitatively formalising overlap in general scenarios is surprisingly challenging (c.f.\ ~\cref{sec:overlap,sec:app-overlap}), we note for now that when the encoder distribution is unimodal, it is typically well-characterized by the mutual information between the data and the latents~\(I(\x;\z)\).

The regularisation factor~\cref{item:reg} enforces a congruence between the (aggregate) latent embeddings of data and the dependency structures expressed in the prior.
We posit that such structure is best expressed in the prior, as opposed to explicit independence regularisation of the marginal posterior~\citep{Hyunjik2018,chen2018isolating},
\begin{inparaenum}[]
\item to enable the \emph{generative} model to express the desired decomposition, and
\item to avoid potentially violating self-consistency between the encoder, decoder, and true data
generating distributions.
\end{inparaenum}
The prior also provides a rich and flexible means of expressing desired structure by defining a generative process that encapsulates dependencies between variables, as with a graphical model.

Critically, \emph{neither factor is sufficient in isolation}.
An inappropriate level of overlap in the latent space will impede interpretability, irrespective of quality of regularisation, as the latent space need not be meaningful.
Conversely, without the pressure to regularise to the prior, the latent space is under no constraint to exhibit the desired structure.

Decomposition is inherently subjective as we must choose the structure of the prior we regularise to depending on how we intend to use our learned model or what kind of features we would like to uncover from the data.
This may at first seem unsatisfactory compared to the seemingly objective adjustments often made to the \gls{ELBO} by disentanglement methods.
However, disentanglement \emph{is itself} a subjective choice for the decomposition.
We can embrace this subjective nature through judicious choices of the prior distribution; ignoring this imposes unintended assumptions which can have unwanted effects.
For example, as we will later show, the rotational invariance of the standard prior $p(\z)=\mathcal{N}(\z;0,I)$ can actually hinder disentanglement.

%%% Local Variables:
%%% mode: latex
%%% TeX-master: "../main"
%%% End:

% LocalWords:  VAE latents ELBO priors datapoints 4mm

%\vspace*{-1ex}
% !TEX root =  main.tex

\section{Deconstructing the \acrshort{BVAE}}%
To connect existing approaches to our proposed framework, we now consider, as a case study, the \acrshort{BVAE}~\citep{higgins2016beta}---an adaptation of the \gls{VAE} objective (\acrshort{ELBO}) to learn better-disentangled representations.
Specifically, it scales the KL term in the standard ELBO by a factor $\beta > 0$ as
\begin{align}
  \hspace*{-1.4ex}%
  \L_\beta(\x)
  \!=\! \Ex[\q{\z|\x}\!]{\log \p{\x|\z}}
  \!-\! \beta \KL{\q{\z|\x}\!}{\!\pz\!}.\!\!
  \label{eq:beta-vae}
\end{align}
\citet{Hoffman2017uq} showed that the \acrshort{BVAE} target can be viewed as a standard \gls{ELBO} with the alternative prior \(r(\z) \propto \q{\z}^{(1-\beta)} p(\z)^{\beta}\), along with terms involving the mutual information and the prior's normalising constant.

We now introduce an alternate deconstruction as follows
\begin{restatable}{theorem}{betaThe}
  The $\beta$-VAE target~\(\L_\beta(\x)\)  can be interpreted in terms of the standard ELBO, $\mathcal{L}\left(\x;\pi_{\theta,\beta},q_{\phi}\right)$, for an adjusted target $\gn{\x,\z} \triangleq \p{\x \given \z} \fz$ with annealed prior \(\fz \triangleq \pz^{\beta}/\Fz\) as
  \begin{align}
    \mathcal{L}_{\beta}(\x)
    &= \mathcal{L}\left(\x;\pi_{\theta,\beta},q_{\phi} \right)
      + (\beta - 1) H_{q_{\phi}}
      + \log \Fz
      \label{eq:bvae}
  \end{align}
  where \(\Fz \triangleq \int_{\z} \pz^{\beta} d\z\) is constant given~\(\beta\), and \(H_{q_{\phi}}\) is the entropy of \(\q{\z \given \x}\).
\end{restatable}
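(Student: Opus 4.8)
The plan is to prove the identity by expanding both sides in terms of expectations taken under $\q{\z|\x}$ and then matching the resulting terms. First I would write out the standard ELBO for the adjusted model $\gn{\x,\z} = \p{\x|\z}\fz$, which by \cref{eq:elbo} reads $\mathcal{L}(\x;\pi_{\theta,\beta},q_{\phi}) = \Ex[\q{\z|\x}]{\log \p{\x|\z}} - \KL{\q{\z|\x}}{\fz}$. The key structural observation is that the reconstruction term $\Ex[\q{\z|\x}]{\log \p{\x|\z}}$ is identical in both $\L_\beta(\x)$ and this adjusted ELBO, so the entire discrepancy between the two objectives lives in the divergence terms and can be isolated by subtraction.

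Next I would expand both divergences. Using $\log \fz = \beta \log \pz - \log \Fz$, which follows directly from the definition $\fz = \pz^{\beta}/\Fz$, the adjusted divergence becomes $\KL{\q{\z|\x}}{\fz} = \Ex[\q{\z|\x}]{\log \q{\z|\x}} - \beta \Ex[\q{\z|\x}]{\log \pz} + \log \Fz$, whereas the $\beta$-scaled divergence appearing in \cref{eq:beta-vae} is $\beta\KL{\q{\z|\x}}{\pz} = \beta\Ex[\q{\z|\x}]{\log \q{\z|\x}} - \beta \Ex[\q{\z|\x}]{\log \pz}$. Both share the term $-\beta\Ex[\q{\z|\x}]{\log\pz}$, which therefore cancels upon subtraction, leaving $\L_\beta(\x) - \mathcal{L}(\x;\pi_{\theta,\beta},q_{\phi}) = (1-\beta)\Ex[\q{\z|\x}]{\log\q{\z|\x}} + \log \Fz$.

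Finally I would identify the remaining expectation as an entropy. Since $H_{q_{\phi}} = -\Ex[\q{\z|\x}]{\log \q{\z|\x}}$ by definition, the coefficient $(1-\beta)$ multiplying the negative entropy produces $(\beta-1)H_{q_{\phi}}$, which recovers \cref{eq:bvae}. The claim that $\Fz$ is constant given $\beta$ is immediate from its definition as an integral over $\z$ depending only on $\beta$ and the fixed prior, hence contributing nothing to gradients in $\theta$ or $\phi$.

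This argument is essentially a bookkeeping exercise rather than one with a genuine obstacle; the only point requiring care is tracking the mismatched coefficient on the $\log \q{\z|\x}$ term---it carries weight $\beta$ in $\L_\beta(\x)$ but weight $1$ in the standard ELBO for the adjusted model---since it is precisely this mismatch, and nothing else, that yields the entropy regulariser. One caveat worth flagging is that $\Fz$ must be finite for $\fz$ to be a valid normalised density, which holds for the Gaussian priors considered here but implicitly constrains the admissible range of $\beta$.
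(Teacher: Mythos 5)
Your proof is correct and follows essentially the same route as the paper's: both expand the two divergence terms using $\log \fz = \beta \log \pz - \log \Fz$, observe that the reconstruction and $\beta\,\Ex[\q{\z\given\x}]{\log \pz}$ contributions coincide, and identify the leftover $(1-\beta)\Ex[\q{\z\given\x}]{\log \q{\z\given\x}}$ as $(\beta-1)H_{q_{\phi}}$. Organising it as a subtraction of the two objectives rather than a forward chain of equalities is a cosmetic difference only; your closing caveat about finiteness of $\Fz$ is a sensible (and correct) addition the paper leaves implicit.
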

\vspace*{-2ex}
\begin{proof}
	All proofs are given in~\cref{sec:bvae-thm}.
\end{proof}
\vspace*{-1ex}
Clearly, the second term in~\cref{eq:bvae}, enforcing a maximum entropy regulariser on the posterior~\(\q{\z \given \x}\), allows the value of~\(\beta\) to affect the overlap of encodings in the latent space.
We thus see that it provides a means of controlling decomposition factor (a).
However, it is itself not sufficient to enforce disentanglement.
For example, the entropy of $\q{\z \given \x}$ is independent of its mean $\mu_{\theta}(\x)$ and is independent to rotations of $\z$, so it is clearly incapable of discouraging certain representations with poor disentanglement.
All the same, having the wrong level of regularization can, in turn, lead to an inappropriate level of overlap and undermine the ability to disentangle.  Consequently, this term is still important.

Although the precise impact of prior annealing depends on the original form of the prior, the high-level effect is the same---larger values of $\beta$ cause the effective latent space to collapse towards the modes of the prior.
For uni-modal priors, the main effect of annealing is to reduce the scaling of $\z$; indeed this is the only effect for generalized Gaussian distributions.
While this would appear not to have any tangible effects, closer inspection suggests otherwise---it ensures that the scaling of the encodings matches that of the prior.
Only incorporating the maximum-entropy regularisation will simply cause the scaling of the latent space to increase.
The rescaling of the prior now cancels this effect, ensuring the scaling of $\q{\z}$ matches that of $p(\z)$.

Taken together, this implies that the \acrshort{BVAE}'s ability to encourage disentanglement is predominantly through \emph{direct} control over the level of overlap.
It places no other direct constraint on the latents to disentangle (although in some cases, the annealed prior may inadvertently encourage better disentanglement), but instead helps avoid the pitfalls of inappropriate overlap.
Amongst other things, this explains why large $\beta$ is not universally beneficial for disentanglement, as the level of overlap can be increased too far.

\subsection{Special Case -- Gaussians}
\label{sec:bvae-corr1}

We can gain further insights into the \acrshort{BVAE} in the common use case---assuming a Gaussian prior, $p(\z) = \mathcal{N}(\z;0,\Sigma)$, and Gaussian encoder, $\q{\z \given \x} = \mathcal{N}\left(\z ; \mu_{\phi}(\x),S_{\phi}(\x)\right)$.
Here it is straightforward to see that annealing simply scales the latent space by $1/\sqrt{\beta}$, i.e. $f_{\beta}(\z) = \mathcal{N}(\z;0,\Sigma/\beta)$.
Given this, it is easy to see that a \gls{VAE} trained with the adjusted target \(\mathcal{L}\left(\x ; \pi_{\theta,\beta},q_{\phi} \right)\), but appropriately scaling the latent space, will behave identically to one trained with the original target \(\mathcal{L}(\x)\).
It will also have an identical \gls{ELBO} as the expected reconstruction is trivially the same, while the \acrshort{KL} between Gaussians is invariant to scaling both equally.
More precisely, we have the following result.\vspace*{0.1ex}
\begin{restatable}{corollary}{gauss}
  \label{cor:gauss}
  If $\pz = \mathcal{N}(\z;0,\Sigma)$ and $\q{\z \given \x} = \mathcal{N}\left(\z ; \mu_{\phi}(\x),S_{\phi}(\x)\right)$, then,
  \begin{align}
    \label{eq:bvae3:app}
    \hspace*{-1.4ex}%
    \L_{\beta}(\x;\theta,\phi)
    &= \L\left(\x;\theta',\phi'\right)
    + \frac{(\beta - 1)}{2} \log \dmt{S_{\phi'}(\x)}
    + c
  \end{align}
  where~$\theta'$ and~$\phi'$ represent rescaled networks such that
  \begin{align*}
    p_{\theta'}(\x \given \z)
    &= \p{\x \given \z/\sqrt{\beta}}, \displaybreak[0] \\
    q_{\phi'}(\z | \x)
    &= \mathcal{N}\left(\z; \mu_{\phi'}(\x), S_{\phi'}(\x)\right), \displaybreak[0]  \\
    \mu_{\phi'}(\x)
    &=\sqrt{\beta}\mu_{\phi}(\x), \quad
    S_{\phi'}(\x)
    = \beta S_{\phi}(\x),
  \end{align*}
  and~\(c \triangleq \frac{D(\beta - 1)}{2} \left( 1 + \log \frac{2\pi}{\beta}\right) + \log \Fz\) is a constant, with~\(D\) denoting the dimensionality of~\(\z\).
\end{restatable}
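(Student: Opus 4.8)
The plan is to specialise the preceding deconstruction theorem to the Gaussian setting and then to recognise the ELBO with adjusted target as a standard ELBO under a rescaling of the latent coordinates. Starting from the identity
\begin{align*}
  \L_\beta(\x) = \L(\x;\pi_{\theta,\beta},q_\phi) + (\beta-1)H_{q_\phi} + \log\Fz,
\end{align*}
the first step is to evaluate the annealed prior $\fz = \pz^{\beta}/\Fz$ explicitly. Since $\pz = \mathcal{N}(\z;0,\Sigma)$, raising the density to the power $\beta$ and renormalising rescales the covariance, giving $\fz = \mathcal{N}(\z;0,\Sigma/\beta)$ --- exactly the rescaling remarked upon just before the statement.

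The central step is a change of variables $\z \mapsto \sqrt{\beta}\,\z$. First, the reconstruction term $\Ex[\q{\z \given \x}]{\log \p{\x \given \z}}$ is invariant under any invertible reparametrisation of the latent, so substituting $\z = \z'/\sqrt{\beta}$ turns the decoder into $p_{\theta'}(\x \given \z') = \p{\x \given \z'/\sqrt{\beta}}$ and pushes the encoder $\q{\z \given \x}$ forward to $\mathcal{N}\!\left(\z';\sqrt{\beta}\mu_\phi(\x),\beta S_\phi(\x)\right) = q_{\phi'}(\z' \given \x)$, matching the claimed rescaled networks. Second, because the \acrshort{KL} is invariant under a common invertible transformation of both its arguments, and the map $\z \mapsto \sqrt{\beta}\,\z$ sends $\fz = \mathcal{N}(\z;0,\Sigma/\beta)$ to $\mathcal{N}(\z;0,\Sigma) = \pz$, we obtain $\KL{\q{\z \given \x}}{\fz} = \KL{q_{\phi'}(\z \given \x)}{\pz}$. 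Combining these two facts yields $\L(\x;\pi_{\theta,\beta},q_\phi) = \L(\x;\theta',\phi')$, which disposes of the first term on the right-hand side.

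It then remains to convert the entropy term. Using the closed form $H_{q_\phi} = \tfrac{D}{2}(1+\log 2\pi) + \tfrac{1}{2}\log\dmt{S_\phi(\x)}$ for a Gaussian, I would rewrite $\log\dmt{S_\phi(\x)}$ in terms of the rescaled covariance via $\dmt{S_{\phi'}(\x)} = \beta^{D}\dmt{S_\phi(\x)}$, so that $\log\dmt{S_\phi(\x)} = \log\dmt{S_{\phi'}(\x)} - D\log\beta$. Substituting this into $(\beta-1)H_{q_\phi}$ produces the $\tfrac{\beta-1}{2}\log\dmt{S_{\phi'}(\x)}$ term together with a $\z$-independent remainder $\tfrac{D(\beta-1)}{2}\!\left(1+\log\tfrac{2\pi}{\beta}\right)$; absorbing this remainder and the leftover $\log\Fz$ into the constant $c$ then completes the stated identity.

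The only genuinely delicate point is the change-of-variables step: one must confirm that both the expected reconstruction and the \acrshort{KL} transform in the right sense and that the induced pushforwards of $\q{\z \given \x}$ and $\fz$ coincide \emph{exactly} with the stated $\theta'$ and $\phi'$. The \acrshort{KL} invariance is the standard observation that the Jacobian factors appearing in the two transformed densities cancel inside the log-ratio and the integrating measure; once that is in hand, everything else reduces to routine Gaussian algebra and careful bookkeeping of the $\beta$-dependent constants.
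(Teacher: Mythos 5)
Your proposal is correct and follows essentially the same route as the paper: both hinge on the $\z \mapsto \sqrt{\beta}\,\z$ rescaling after identifying $\fz = \mathcal{N}(\z;0,\Sigma/\beta)$, with the paper carrying out the Jacobian cancellation explicitly on the densities (the $\beta^{D/2}$ factors) where you invoke the pushforward-invariance of the reconstruction expectation and of the \acrshort{KL}. The entropy bookkeeping via $\dmt{S_{\phi'}(\x)} = \beta^{D}\dmt{S_{\phi}(\x)}$ matches the paper's computation exactly.
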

Noting that as $c$ is irrelevant to the training process, this indicates an equivalence, up to scaling of the latent space, between training with the \acrshort{BVAE} objective and a maximum-entropy regularised version of the standard ELBO
\begin{align}
\label{eq:truth}
\mathcal{L}_{H,\beta}(\x)
\triangleq \mathcal{L}(\x)
+ \frac{(\beta - 1)}{2}\log \dmt{S_{\phi}(\x)},
\end{align}
whenever $\pz$ and $\q{\z \given \x}$ are Gaussian.
Note that we implicitly presume suitable adjustment of neural-network hyper-parameters and the stochastic gradient scheme to account for the change of scaling in the optimal networks.

Moreover, the stationary points for the two objectives \(\mathcal{L}_{\beta}(\x ; \theta, \phi)\) and \(\mathcal{L}_{H,\beta}\left(\x ; \theta',\phi' \right)\) are equivalent (c.f.\ Corollary 2 in \cref{sec:bvae-thm}), indicating that optimising for~\eqref{eq:truth} leads to networks equivalent
to those from optimising the \acrshort{BVAE} objective~\cref{eq:beta-vae}, up to scaling the encodings by a factor of $\sqrt{\beta}$.
Under the isotropic Gaussian prior setting, we further have the following result showing that the \acrshort{BVAE} objective is invariant to rotations of the latent space.

\begin{restatable}{theorem}{rotate}
  \label{the:rotate}
  If $\pz = \mathcal{N}(\z;0,\sigma I)$ and $\q{\z \given \x} = \mathcal{N}\left(\z ; \mu_{\phi}(\x),S_{\phi}(\x)\right)$, then for all rotation matrices $R$,
  \begin{align}
    \label{eq:bvae-rot}
    \L_{\beta}(\x;\theta,\phi)
    =& \L_{\beta}(\x;\theta^\dag(R),\phi^\dag(R))
  \end{align}
  where~$\theta^\dag(R)$ and~$\phi^\dag(R)$ are transformed networks such that
  \begin{align*}
    p_{\theta^\dag}(\x \given \z)
    &= \p{\x \given R^T\z}, \\
    q_{\phi^\dag}(\z | \x)
    &= \mathcal{N}\left(\z; R\mu_{\phi}(\x), R S_{\phi}(\x) R^T\right).
  \end{align*}
\end{restatable}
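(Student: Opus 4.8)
The plan is to show that each of the two terms comprising the $\beta$-VAE objective in \cref{eq:beta-vae}---the expected reconstruction $\Ex[\q{\z \given \x}]{\log \p{\x \given \z}}$ and the divergence $\KL{\q{\z \given \x}}{\pz}$---is individually invariant under the reparametrisation $(\theta,\phi) \mapsto (\theta^\dag(R), \phi^\dag(R))$. Since $\L_\beta = \Ex[\q{\z \given \x}]{\log \p{\x \given \z}} - \beta\,\KL{\q{\z \given \x}}{\pz}$, invariance of both terms yields the claim immediately.

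For the reconstruction term, I would first note that $q_{\phi^\dag}(\z \given \x)$ is exactly the pushforward of $\q{\z \given \x}$ under the linear map $\z \mapsto R\z$: if $\z \sim \q{\z \given \x}$ then $R\z \sim \mathcal{N}(R\mu_\phi(\x), R S_\phi(\x) R^T) = q_{\phi^\dag}(\z \given \x)$. Applying this change of variables inside the expectation and substituting the definition $p_{\theta^\dag}(\x \given \z) = \p{\x \given R^T\z}$ gives
\begin{align*}
\Ex[q_{\phi^\dag}(\z \given \x)]{\log p_{\theta^\dag}(\x \given \z)}
&= \Ex[\q{\z \given \x}]{\log \p{\x \given R^T R \z}} \\
&= \Ex[\q{\z \given \x}]{\log \p{\x \given \z}},
\end{align*}
where the final equality uses $R^T R = I$. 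Note that this step is independent of the form of the prior and would hold for any $\pz$.

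For the divergence term I would invoke the closed form of the KL between two Gaussians. Writing $\mu \triangleq \mu_\phi(\x)$ and $S \triangleq S_\phi(\x)$, the transformed divergence $\KL{\mathcal{N}(R\mu, R S R^T)}{\mathcal{N}(0,\sigma I)}$ is built (all scaled by $\tfrac12$) from a trace term $\tfrac{1}{\sigma}\mathrm{tr}(R S R^T)$, a Mahalanobis term $\tfrac{1}{\sigma}(R\mu)^T(R\mu)$, a log-determinant term $\log(\sigma^D / \dmt{R S R^T})$, and the dimension-dependent constant $-D$. Here the isotropy of the prior is what renders each piece invariant: the cyclic property of the trace with $R^T R = I$ gives $\mathrm{tr}(R S R^T) = \mathrm{tr}(S)$; orthogonality gives $(R\mu)^T(R\mu) = \mu^T \mu$; and $\dmt{R S R^T} = \dmt{S}$ since $\dmt{R} = 1$. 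Hence the transformed divergence collapses to the original $\KL{\mathcal{N}(\mu, S)}{\mathcal{N}(0,\sigma I)}$.

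The difficulty here is conceptual rather than computational: the essential step is to make explicit exactly where isotropy is required. It enters solely through the prior covariance $\sigma I$ appearing as $\sigma^{-1} I$ in the trace and Mahalanobis terms. Had the prior covariance $\Sigma$ been non-isotropic, $\mathrm{tr}(\Sigma^{-1} R S R^T)$ and $(R\mu)^T \Sigma^{-1}(R\mu)$ would in general differ from their unrotated counterparts and the invariance would break. This is precisely why the result is restricted to the isotropic prior and why it is stronger than \cref{cor:gauss}, whose rescaling equivalence holds for arbitrary $\Sigma$. Combining the two invariances establishes $\L_\beta(\x;\theta,\phi) = \L_\beta(\x;\theta^\dag(R),\phi^\dag(R))$.
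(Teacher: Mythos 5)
Your proposal is correct and follows essentially the same route as the paper's proof: split $\L_\beta$ into the reconstruction and KL terms, show the reconstruction term is invariant via the pushforward/change-of-variables argument with $R^TR=I$, and show the KL term is invariant using the rotational symmetry of the isotropic prior. The only difference is one of detail—you verify the KL invariance explicitly from the closed-form Gaussian KL (trace, Mahalanobis, and log-determinant terms), whereas the paper simply appeals to the prior's rotational symmetry—and your remark pinpointing where isotropy is used is a nice, accurate addition.
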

%\vspace{-6pt}

This shows that the \acrshort{BVAE} objective does not directly encourage latent variables to take on meaningful representations when using the standard choice of an isotropic Gaussian prior.
In fact, on its own, it encourages latent representations which match the true generative factors no more than it encourages \emph{any arbitrary rotation} of these factors, with such rotations capable of exhibiting strong correlations between latents.
This view is further supported by our empirical results (see~\cref{fig:disentenglement}), where we did not observe any gains in disentanglement (using the metric from \citet{Hyunjik2018}) from increasing $\beta > 0$ with an isotropic Gaussian prior trained on the \emph{2D Shapes} dataset~\citep{dsprites17}.
It may also go some way to explaining the extremely high levels of variation we found in the disentanglement-metric scores between different random seeds at train time.

It should be noted, however, that the value of $\beta$ can indirectly influence the level of disentanglement when using a mean-field assumption for the encoder distribution (i.e. restricting $S_{\phi}(x)$ to be diagonal).
As noted by~\citet{stuhmer2019isavae,rolinek2018variational}, increasing $\beta$ can reinforce existing inductive biases, wherein mean-field assumptions encourage representations which reduce dependence between the latent dimensions~\citep{turner2011two}.
%

%%% Local Variables:
%%% mode: latex
%%% TeX-master: "../main"
%%% End:

% LocalWords:  BVAE VAE ELBO betaThe regulariser priors Gaussians latents
% LocalWords:  prior's

%\vspace*{-1ex}
% !TEX root =  main.tex

\section{An Objective for Enforcing Decomposition}%
Given the characterisation set out above, we now develop an objective that incorporates the effect of both factors~(\labelcref{item:overlap}) and~(\labelcref{item:reg}).
Our analysis of the \acrshort{BVAE} tells us that its objective allows direct control over the level of overlap, i.e.\ factor~\cref{item:overlap}.
To incorporate direct control over the regularisation~\cref{item:reg} between the marginal posterior and the prior, we add a divergence term~\(\mathbb{D}(\q{z}, p(\z))\), yielding
\begin{align}
  \begin{split}
    \L_{\alpha, \beta}&(\x)
    = \Ex[\q{\z \given \x}]{\log \p{\x \given \z}}\\
    &- \beta~\KL{\q{\z \given \x}}{p(\z)}
    - \alpha~\mathbb{D}(\q{\z}, p(\z))
    \label{eq:alpha_obj}
  \end{split}
\end{align}
allowing control over how much factors (\labelcref{item:overlap}) and~(\labelcref{item:reg}) are enforced, through appropriate setting of $\beta$ and $\alpha$ respectively.

Note that such an additional term has been previously considered by \citet{Kumar:2017vs}, with $\mathbb{D}(\q{\z}, p(\z)) = \KL{\q{\z}}{p(\z)}$, although for the sake of tractability they rely instead on moment matching using covariances.
There have also been a number of approaches that decompose the standard \gls{VAE} objective in different ways \citep[e.g.~][]{Hoffman2016vz,Esmaeili2018up,dilokthanakul2019explicit} to expose \(\KL{\q{\z}}{p(\z)}\) as a component, but, as we discuss in \cref{sec:posteriorreg}, this can be difficult to compute correctly in practice, with common approaches leading to highly biased estimates whose practical behaviour is very different than the divergence they are estimating, unless very large batch sizes are used.

Wasserstein Auto-Encoders~\citep{Tolstikhin:2017wy} formulate an objective that includes a general divergence term between the prior and marginal posterior, computed using either \gls{MMD} or a variational formulation of the Jensen-Shannon divergence (a.k.a \acrshort{GAN} loss).
However, we find that empirically, choosing the \gls{MMD}'s kernel and numerically stabilising its U-statistics estimator to be tricky, and designing and learning a \acrshort{GAN} to be cumbersome and unstable.
Consequently, the problems of choosing an appropriate $\mathbb{D}(\q{\z}, p(\z))$ and generating reliable estimates for this choice are tightly coupled, with a general purpose solution remaining an important open problem; see further discussion in \cref{sec:posteriorreg}.

%%% Local Variables:
%%% mode: latex
%%% TeX-master: "../main"
%%% End:

% LocalWords:  VAE latents BVAE ELBO betaThe maxent regulariser priors
% LocalWords:  datapoint encodings lookup dataset embeddings Hyunjik
% LocalWords:  chen interpretability higgins Kumar vz Esmaeili wy
% LocalWords:  Wasserstein Tolstikhin MMD variational GAN
%\vspace*{-1ex}
% !TEX root =  main.tex

\begin{figure*}[t]
	\centering
	\includegraphics[width=0.45\textwidth]{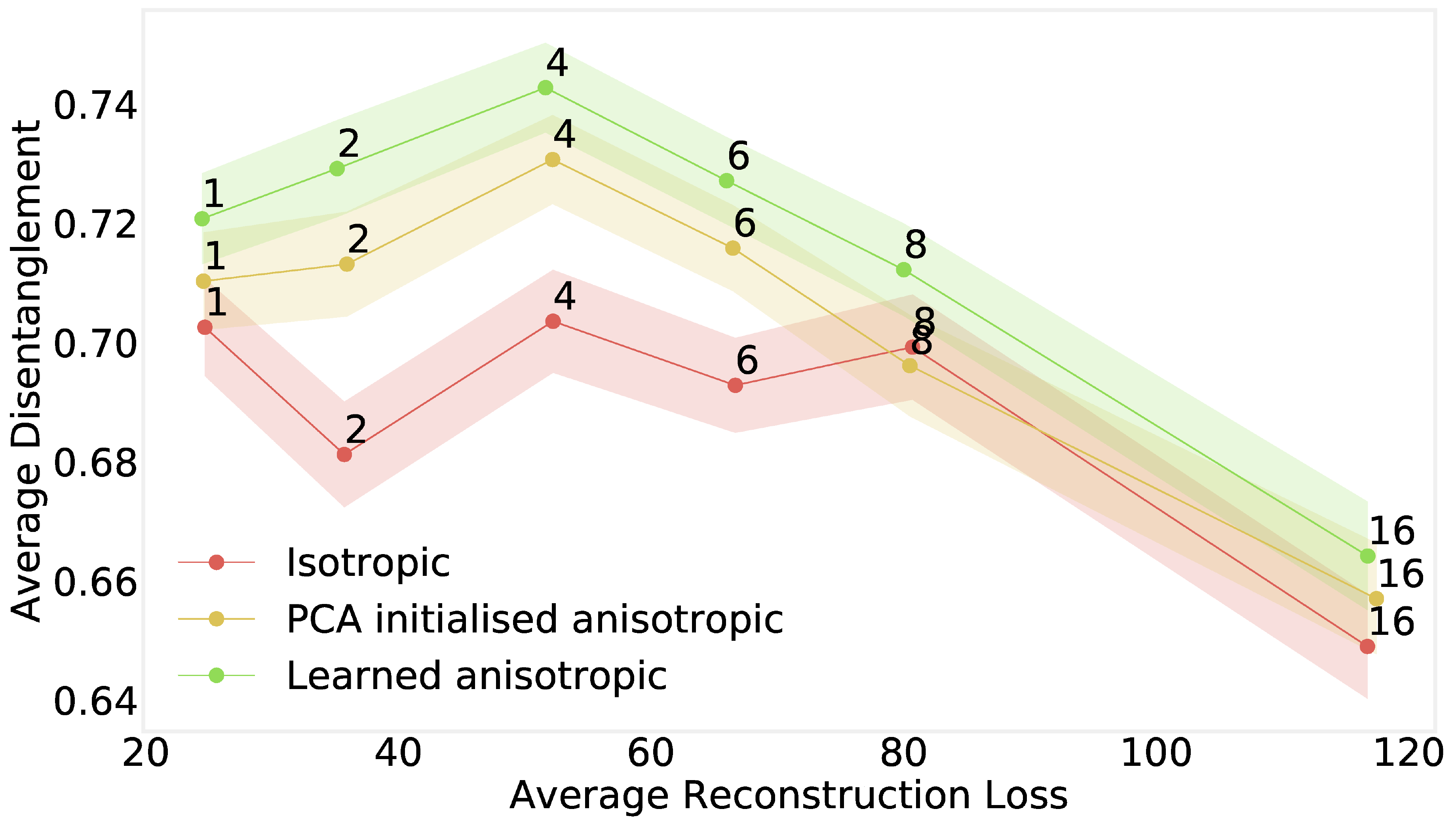} \quad
	\includegraphics[width=0.45\textwidth]{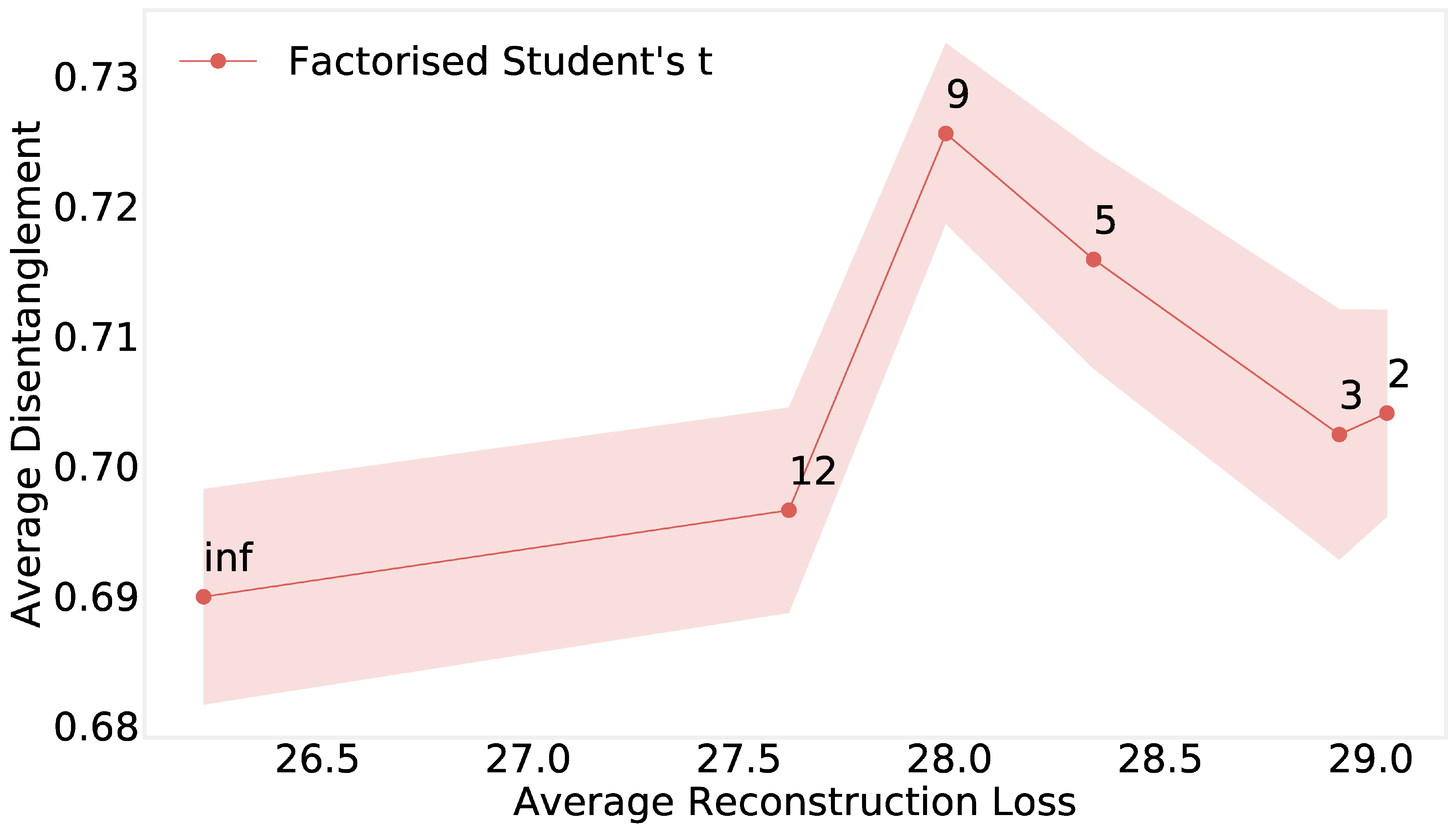}
	%\vspace*{-2ex}
	\caption{%
		Reconstruction loss vs disentanglement metric of~\citet{Hyunjik2018}.
		[Left] Using an anisotropic Gaussian with diagonal covariance either learned, or fixed to principal-component values of the dataset.
		Point labels represent different values of $\beta$.
		[Right] Using $p_{\nu}(\z) \!=\! \prod_{d} \!\textsc{Student-t}(z_d;\nu)$ for different $\nu$ with $\beta=1$.
		Note the different x-axis scaling.
		Shaded areas represent $\pm 2$ standard errors for estimated mean disentanglement calculated using $100$ separately trained networks.
		We thus see that the variability on the disentanglement metric is very large, presumably because of stochasticity in whether learned dimensions correspond to true generative factors.
		The variability in the reconstruction was only negligible and so is not shown.
		See \cref{sec:experimentaldetails} for full experimental details.
	}
	\label{fig:disentenglement}
%	\vspace*{-1ex}
\end{figure*}

\section{Experiments}
\label{sec:experiments}

%\vspace*{-1ex}
\subsection{Prior for Axis-Aligned Disentanglement}%
We first show how subtle changes to the prior distribution can yield improvements in disentanglement.
The standard choice of an isotropic Gaussian has previously been justified by the correct assertion that the latents are independent under the prior~\citep{higgins2016beta}.
However, as explained in~\cref{sec:bvae-corr1}, the rotational invariance of this prior means that it does not directly encourage axis-aligned representations.
Priors that break this rotational invariance should be better suited for learning disentangled representations.
We assess this hypothesis by training a \acrshort{BVAE} (i.e.~\eqref{eq:alpha_obj} with $\alpha=0$) on the \emph{2D Shapes} dataset~\citep{dsprites17} and evaluating disentanglement using the metric of~\citet{Hyunjik2018}.

\Cref{fig:disentenglement} demonstrates that notable improvements in disentanglement can be achieved by using non-isotropic priors:
for a given reconstruction loss, implicitly fixed by $\beta$, non-isotropic Gaussian priors got better disentanglement scores, with further improvement achieved when the prior variance is learnt.
With a product of Student-t priors $p_{\nu}(\z)$ (noting $p_{\nu}(\z) \to \mathcal{N}(\z;\mathbf{0},\mathbf{I})$ as $\nu\to\infty$), reducing~\(\nu\) only incurred a minor reconstruction penalty, for improved disentanglement.
Interestingly,
very low values of~\(\nu\) caused the disentanglement score to drop again (though still giving higher values than the Gaussian).
We speculate that this may be related to the effect of heavy tails on the disentanglement metric itself, rather than being an objectively worse disentanglement.
Another interesting result was that for an isotropic Gaussian prior, as per the original \acrshort{BVAE} setup, no gains at all were achieved in disentanglement by increasing $\beta$.

%\vspace*{-1ex}
\subsection{Clustered Prior}%
We next consider an alternative decomposition one might wish to impose---\emph{clustering} of the latent space.
For this, we use the ``pinwheels'' dataset from~\citep{Johnson:2016ud} and a mixture of four equally-weighted Gaussians as our prior.
We then conduct an ablation study to observe the effect of varying $\alpha$ and $\beta$ in $\mathcal{L}_{\alpha,\beta}(\mathbf{x})$ (as per~\eqref{eq:alpha_obj}) on the learned representations, taking the divergence to be $\text{KL}\left(p(\z) || \q{\z} \right)$ (see \cref{sec:experimentaldetails} for details).

\begin{figure}[t]
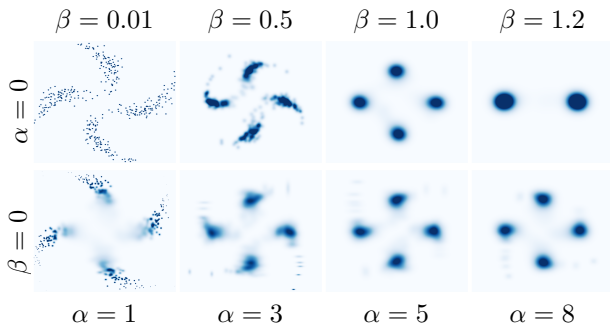

	\centering
	\begin{tabular}{@{}r@{\,}c@{}c@{}c@{}c@{}}
		& \(\beta = 0.01\)
		& \(\beta = 0.5\)
		& \(\beta = 1.0\)
		& \(\beta = 1.2\) \\
		\raisebox{4ex}{\rotatebox[origin=c]{90}{\(\alpha = 0\)}}
		& \includegraphics[width=0.235\columnwidth,trim={0 0em 9cm 0},clip]{pinwheel/betas_mog/0_01}
		& \includegraphics[width=0.235\columnwidth,trim={0 0em 9cm 0},clip]{pinwheel/betas_mog/0_5}
		& \includegraphics[width=0.235\columnwidth,trim={0 0em 9cm 0},clip]{pinwheel/betas_mog/1_0}
		& \includegraphics[width=0.235\columnwidth,trim={0 0em 9cm 0},clip]{pinwheel/betas_mog/1_2} \\[-0.9ex]
		\raisebox{4ex}{\rotatebox[origin=c]{90}{\(\beta = 0\)}}\!
		& \includegraphics[width=0.235\columnwidth,trim={0 0em 9cm 0},clip]{pinwheel/alphas_mog/1}
		& \includegraphics[width=0.235\columnwidth,trim={0 0em 9cm 0},clip]{pinwheel/alphas_mog/3}
		& \includegraphics[width=0.235\columnwidth,trim={0 0em 9cm 0},clip]{pinwheel/alphas_mog/5}
		& \includegraphics[width=0.235\columnwidth,trim={0 0em 9cm 0},clip]{pinwheel/alphas_mog/8} \\[-0.5ex]
		& \(\alpha = 1\)
		& \(\alpha = 3\)
		& \(\alpha = 5\)
		& \(\alpha = 8\)
	\end{tabular}
%	\vspace*{-2ex}
	\caption{%
		Density of aggregate posterior $\q{\z}$
		% (blue) and encoding of single datapoint $\q{\z|\x_1}$ (red)
		with different $\alpha$, $\beta$ for spirals dataset with a mixture of Gaussian prior.
	}
	\label{fig:mog}
%	\vspace*{-3ex}
\end{figure}

We see in \cref{fig:mog} that increasing $\beta$ increases the level of overlap in $\q{\z}$, as a consequence of increasing the encoder variance for individual datapoints.
When $\beta$ is too large, the encoding of a datapoint loses meaning.
Also, as a single datapoint encodes to a Gaussian distribution, $\q{\z|\x}$ is unable to match $\pz$ exactly.
Because $\q{\z|\x}\to\q{\z}$ when $\beta\to\infty$, this in turn means that
overly large values of $\beta$ actually cause a mismatch between  $\q{\z}$ and $\pz$ (see top right of \cref{fig:mog}).
Increasing \(\alpha\), instead always improved the match between $\q{\z}$ and $\pz$.
Here, the finiteness of the dataset and the choice of divergence results in an increase in overlap with increasing $\alpha$, but only up to the level required for a non-negligible overlap between the nearby datapoints: large values of $\alpha$ did not cause the encodings to collapse to a mode.

\begin{figure*}[t]
  \centering
  \includegraphics[width=0.305\textwidth]{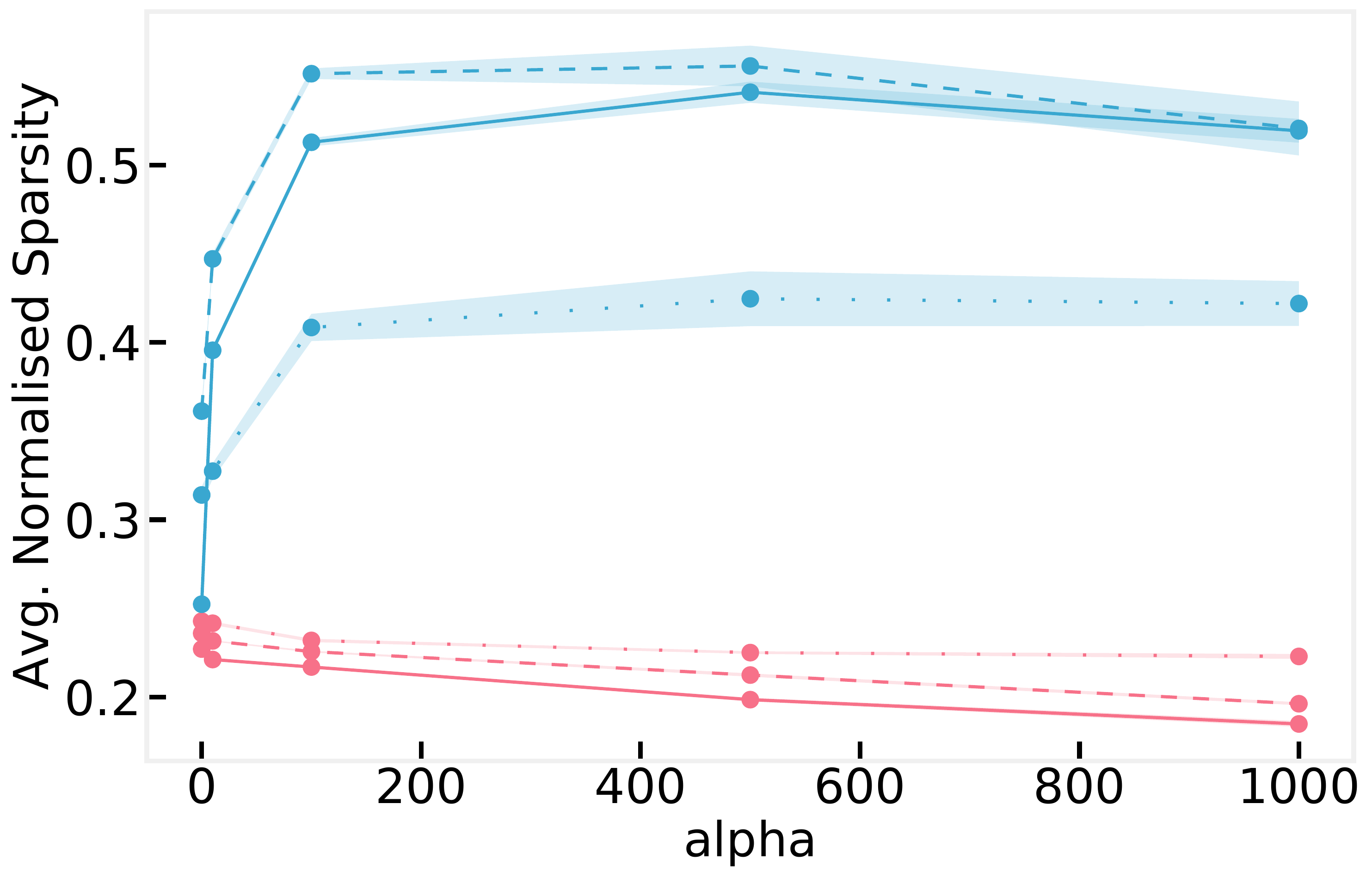}% \qquad
  ~~~\includegraphics[width=0.305\textwidth]{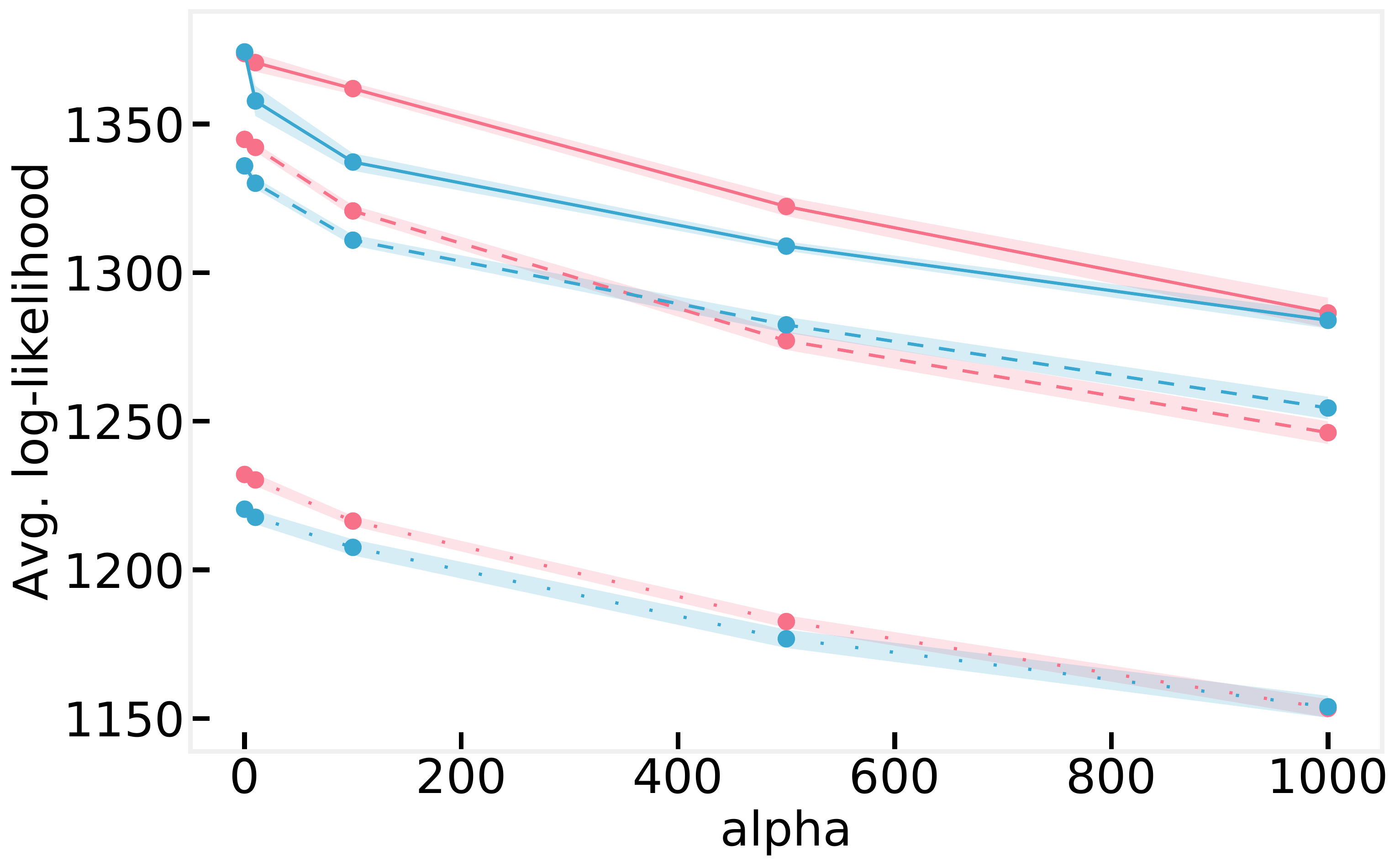}% \qquad
  ~~~\includegraphics[width=0.305\textwidth]{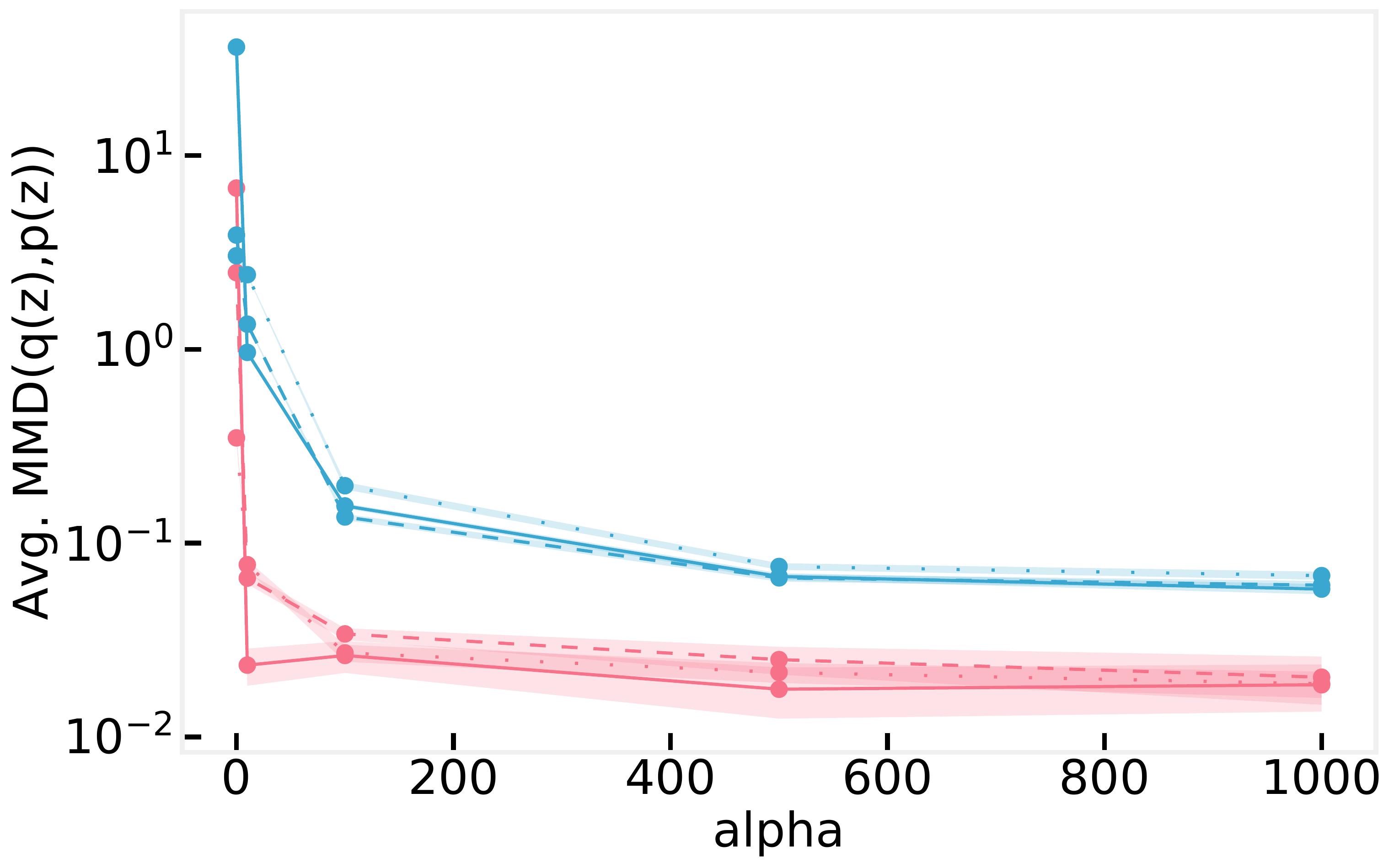}% \qquad
  \qquad \includegraphics[width=0.35\textwidth]{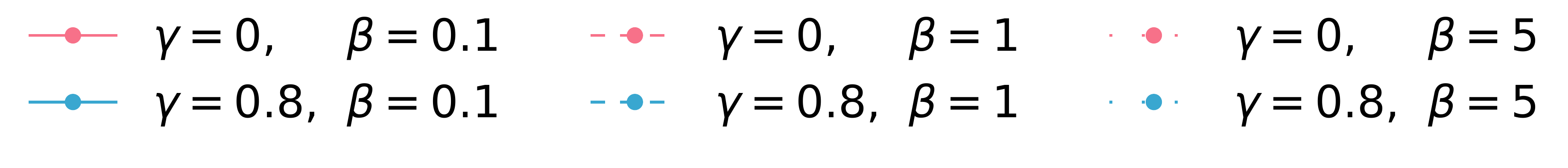}% \qquad
%  \vspace{-1.4em}
  \caption{%
    [Left] Sparsity vs regularisation strength $\alpha$ (c.f.\ \cref{eq:alpha_obj}, high better).
    [Center] Average reconstruction log-likelihood $\E_{p_D(\x)}[\E_{\q{\z|\x}}[\log \p{\x | \z}]]$ vs  $\alpha$ (higher better).
    [Right] Divergence (\gls{MMD}) vs $\alpha$ (lower better).
    Note here that the different values of $\gamma$ represent regularizations to different distributions, with regularization to a Gaussian (i.e. $\gamma=0$) much easier to achieve than the sparse prior, hence the lower divergence.
    Shaded areas represent $\pm2$ standard errors in the mean estimate calculated using $8$ separately trained networks. %.
    See \cref{sec:experimentaldetails} for full experimental details.
  }
  \label{fig:sparsity}
%  \vspace{-3ex}
\end{figure*}

%\vspace*{-2ex}
\subsection{Prior for Sparsity}%

Finally, we consider a commonly desired decomposition---sparsity, which stipulates that only a small fraction of available factors are employed.
That is, a \emph{sparse representation} \citep{olshausen96} can be thought of as one where each embedding has a significant proportion of its dimensions \emph{off}, i.e. close to $0$.
Sparsity has often been considered for feature-learning \citep{Larochelle:2008:CUD:1390156.1390224,conf/icml/CoatesN11} and employed in the probabilistic modelling literature \citep{razanto2007, SparseCoding}.

Common ways to achieve sparsity are through a specific penalty (e.g. $l_1$) or a careful choice of prior (peaked at 0).
Concomitant with our overarching desire to encode requisite structure in the prior, we adopt the latter, constructing a sparse prior as
$
p(\z) = \prod\nolimits_{d} ~(1 - \gamma) ~\mathcal{N}(z_d;0, 1) + \gamma ~\mathcal{N}(z_d;0, \sigma_0^2)$
with $\sigma_0^2=0.05$.
This mixture distribution can be interpreted as a mixture of samples being either \emph{off} or \emph{on}, whose proportion is set by the weight parameter $\gamma$.
We use this prior to learn a \gls{VAE} for the \emph{Fashion-MNIST} dataset~\citep{xiao2017/online} using the objective $\mathcal{L}_{\alpha,\beta}(\mathbf{x})$ (as per~\eqref{eq:alpha_obj}), taking the divergence to be an \gls{MMD} with a kernel that only considers difference between the marginal distributions (see \cref{sec:experimentaldetails} for details).

We measure a representation's sparsity using the \emph{Hoyer} extrinsic metric~\citep{Hurley2008ComparingMO}.
For $\y \in \R^d$,
\begin{align*}
\text{Hoyer}~(\y) = \frac{\sqrt{d} - \| \y \|_1 / \| \y \|_2}{\sqrt{d} - 1} \in [0, 1],
\end{align*}
yielding $0$ for a fully dense vector and $1$ for a fully sparse vector.
Rather than employing this metric directly to the mean encoding of each datapoint, we first normalise each dimension to have a standard deviation of $1$ under its aggregate distribution, i.e. we use $\bar{z}_d = z_d / \sigma(z_d)$ where $\sigma(z_d)$ is the standard deviation of dimension $d$ of the latent encoding taken over the dataset.
This normalisation is important as one could achieve a ``sparse'' representation simply by having different dimensions vary along different length scales (something the \acrshort{BVAE} encourages through its pruning of dimensions~\citep{stuhmer2019isavae}), whereas we desire a representation where different datapoints ``activate'' different features.
We then compute overall sparsity by averaging over the dataset as \(\text{Sparsity} = \frac{1}{n} \sum\nolimits_{i}^n \text{Hoyer}~(\bar{\z}_i)\).
\cref{fig:sparsity} (left) shows that substantial sparsity can be gained by replacing a Gaussian prior ($\gamma=0$) by a sparse prior ($\gamma=0.8$).
It further shows substantial gains from the inclusion of the aggregate posterior regularization, with $\alpha=0$ giving far low sparsity than $\alpha>0$, when using our sparse prior.
The use of our sparse prior did not generally harm the reconstruction compared.
Large values of $\alpha$ did slightly worsen the reconstruction, but this drop-off was much slower than increases in $\beta$ (note that $\alpha$ is increased to much higher levels than $\beta$).
Interestingly, we see that $\beta$ being either too low or too high also harmed the sparsity.

\begin{figure}[t]
  \centering
  \includegraphics[width=\linewidth]{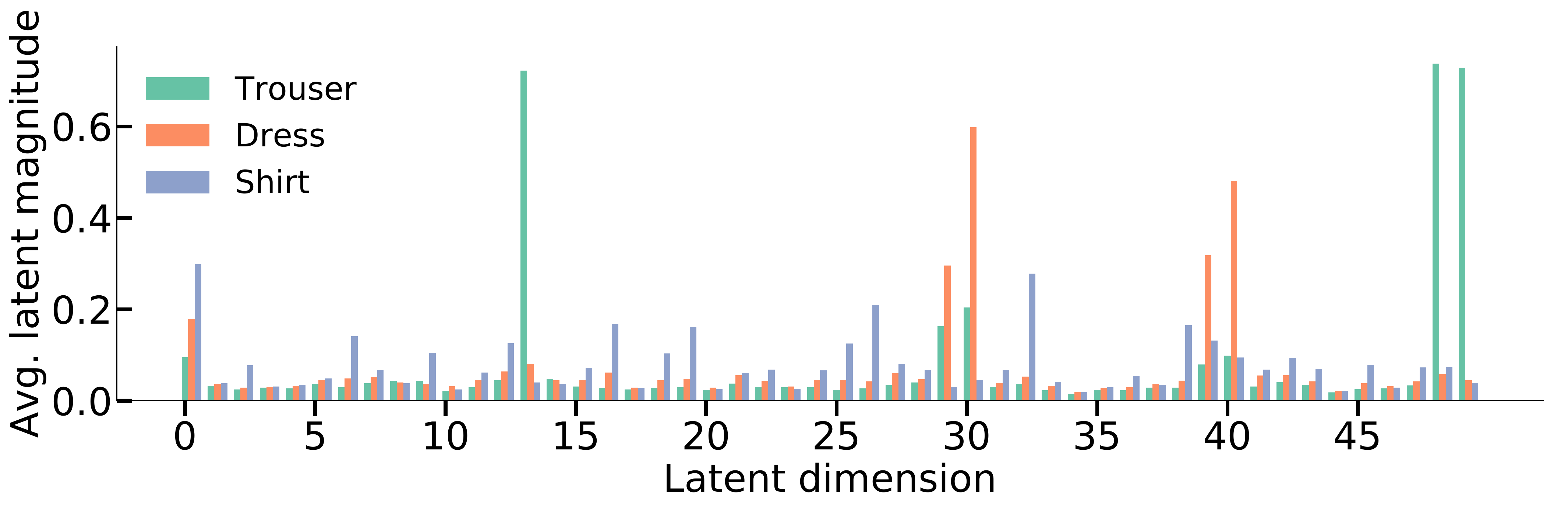}\\[1ex]
  \begin{tabular}[b]{@{\,}c@{\,}c@{\,}c@{\,}c@{}}
    \begin{tikzpicture}
      \node[anchor=south west,inner sep=0] (image) at (0,0) {%
        \includegraphics[width=0.243\linewidth]{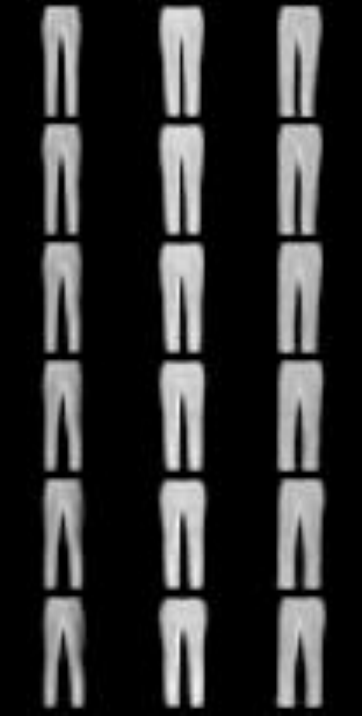}
      };
      \begin{scope}[x={(image.south east)},y={(image.north west)}]
        \draw[white] (0, 0.83) -- (1, 0.83);
      \end{scope}
    \end{tikzpicture}
    &
    \begin{tikzpicture}
      \node[anchor=south west,inner sep=0] (image) at (0,0) {%
        \includegraphics[width=0.243\linewidth]{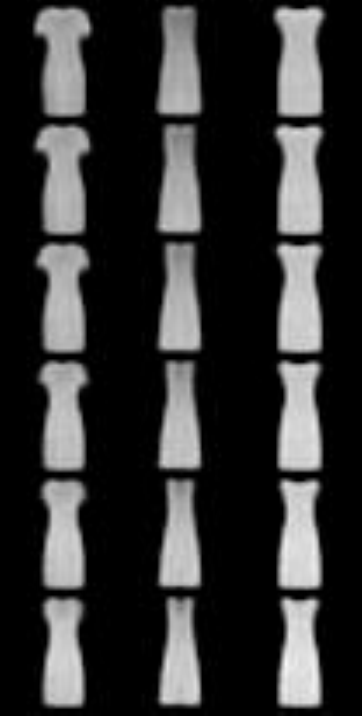}
      };
      \begin{scope}[x={(image.south east)},y={(image.north west)}]
        \draw[white] (0, 0.83) -- (1, 0.83);
      \end{scope}
    \end{tikzpicture}
    &
    \begin{tikzpicture}
      \node[anchor=south west,inner sep=0] (image) at (0,0) {%
        \includegraphics[width=0.243\linewidth]{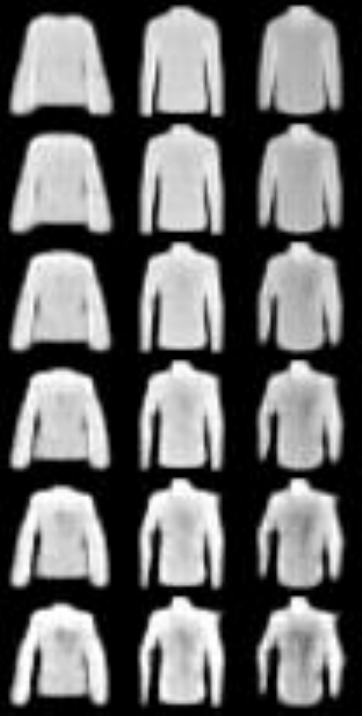}
      };
      \begin{scope}[x={(image.south east)},y={(image.north west)}]
        \draw[white] (0, 0.83) -- (1, 0.83);
      \end{scope}
    \end{tikzpicture}
    &
    \begin{tikzpicture}
      \node[anchor=south west,inner sep=0] (image) at (0,0) {%
        \includegraphics[width=0.243\linewidth]{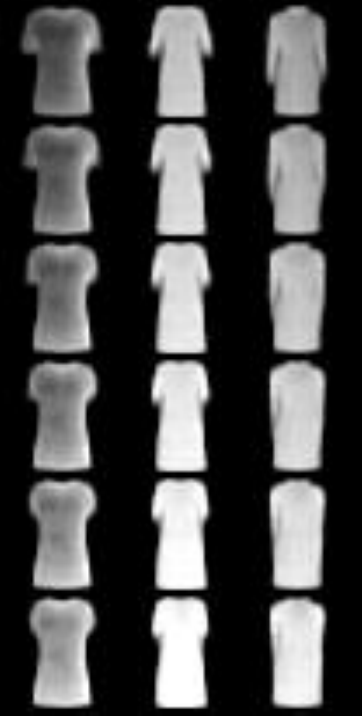}
      };
      \begin{scope}[x={(image.south east)},y={(image.north west)}]
        \draw[white] (0, 0.83) -- (1, 0.83);
      \end{scope}
    \end{tikzpicture}
    \\
    (a) & (b) & (c) & (d)\\%[-2.5ex]
  \end{tabular}
  \caption{%
    Qualitative evaluation of sparsity.
    [Top] Average encoding magnitude over data for three example classes in \emph{Fashion-MNIST}.
    [Bottom] Latent interpolation (\(\downarrow\)) for different datapoints (top layer) along particular `active' dimensions.
    (a) Separation between the trouser legs (dim 49).
    (b) Top/Collar width of dresses (dim 30).
    (c) Shirt shape (loose/fitted, dim 19).
    (d) Style of sleeves across different classes---t-shirt, dress, and coat (dim 40).
  }
  \label{fig:sparsity-qualitative}
 \vspace*{-1ex}
\end{figure}

We explore the qualitative effects of sparsity in \cref{fig:sparsity-qualitative}, using a network trained with \(\alpha=1000, \beta=1,\) and \(\gamma=0.8\), corresponding to one of the models in \cref{fig:sparsity}~(left).
The top plot shows the average encoding magnitude for data corresponding to 3 of the 10 classes in the \emph{Fashion-MNIST} dataset.
It clearly shows that the different classes (trousers, dress, and shirt) predominantly encode information along different sets of dimensions, as expected for sparse representations (c.f.\ \cref{sec:experimentaldetails} for plots for all classes).
For each of these classes, we explore the latent space along a particular `active' dimension---one with high average encoding magnitude---to observe if they capture meaningful features in the image.
We first identify a suitable `active' dimension for a given instance (top row) from the dimension-wise magnitudes of its encoding, by choosing one, say~\(d\), where the magnitude far exceeds \(\sigma_0^2\).
Given encoding value~\(\z_d\), we then interpolate along this dimension (keeping all others fixed) in the range \((\z_d, \z_d + \mathrm{sign}(\z_d))\); the sign of~\(\z_d\) indicating the direction of interpolation.
Exploring the latent space in such a manner demonstrates a variety of consistent feature transformations in the image, both within class (a, b, c), and across classes (d), indicating that these sparse dimensions do capture meaningful features in the image.

Concurrent to our work,~\citet{tonolini2019variational} also considered imposing sparsity in \glspl{VAE} with a spike-slab prior (such that $\sigma_0\to0$).
In contrast to our work, they do not impose a constraint on the aggregate
encoder, nor do they evaluate their results with a quantitative sparsity metric that accounts for the varying length scales of different latent dimensions

%%% Local Variables:
%%% mode: latex
%%% TeX-master: "../main"
%%% End:

% LocalWords:  latents BVAE pinwheels Gaussians datapoint datapoints MMD VAE
% LocalWords:  stochasticity MNIST Hoyer

%\vspace*{-1ex}
% !TEX root =  main.tex
\section{Discussion}
\label{sec:discussion}

\vspace*{-0.3ex}
\paragraph{Characterising Overlap}
\label{sec:overlap}

Precisely formalising what constitutes the level of overlap in the latent space is surprisingly subtle.
Prior work has typically instead considered controlling the level of compression through the mutual information between data and latents~\(I(\x;\z)\)~\citep{alemi2016deep,alemi2018fixing,Hoffman2016vz,phuong2018the}, with, for example,~\citep{phuong2018the} going on to discuss how controlling the compression can ``explicitly encourage useful representations.''
Although~\(I(\x;\z)\) provides a perfectly serviceable characterisation of overlap in a number of cases, the two are not universally equivalent and we argue that it is the latter which is important in achieving useful representations.
In particular, if the form of the encoding distribution is not fixed---as when employing normalising flows, for example---\(I(\x;\z)\) does not necessarily characterise overlap well.
We discuss this in greater detail in \cref{sec:app-overlap}.

However, when the encoder is unimodal with fixed form (in particularly the tail behaviour is fixed) and the prior is well-characterised by Euclidean distances, then these factors have a substantially reduced ability to vary for a given $I(\x;\z)$, which subsequently becomes a good characterisation of the level of overlap.
When $\q{\z|\x}$ is Gaussian, controlling the variance of $\q{\z|\x}$ (with a fixed $\q{\z}$) should similarly provide an effective means of achieving the desired overlap behaviour.
As this is the most common use case, we leave the development of more a general definition of overlap to future work, simply noting that this is an important consideration when using flexible encoder distributions.

\vspace*{-0.3ex}
\paragraph{Can VAEs Uncover True Generative Factors?}
\label{sec:unsup}

In concurrently published work,~\citet{locatello2018challenging} question the plausibility of learning unsupervised disentangled representations with meaningful features, based on theoretical analyses showing an equivalence class of generative models, many members of which could be entangled.
Though their analysis is sound, we posit a counterargument to their conclusions, based on the \emph{stochastic} nature of the encodings used during training.
Namely, that this stochasticity means that they need not give rise to the same~\gls{ELBO} scores (an important exception is the rotational invariance for isotropic Gaussian priors).
Essentially, the encoding noise forces nearby encodings to relate to similar datapoints, while standard choices for the likelihood distribution (e.g.\ assuming conditional independence) ensure that information is stored in the encodings, not just in the generative network.
These restrictions mean that the~\gls{ELBO} prefers smooth representations and, provided the prior is not rotationally invariant, means that there no longer need be a class of different representations with the same~\gls{ELBO}; simpler representations are preferred to more complex ones.

 The exact form of the encoding distribution is also important here.
 For example, imagine we restrict the encoder variance to be isotropic and then use a two dimensional prior where one latent dimension has a much larger variance than the other.
It will be possible to store more information in the prior dimension with higher variance (as we can spread points out more relative to the encoder variance).
 Consequently, that dimension is more likely to correspond to an important factor of the generative process than the other.
 Of course, this does not imply that this is a true factor of variation in the generative process, but neither is the meaning that can be attributed to each dimension completely arbitrary.

All the same, we agree that an important area for future work is to assess when, and to what extent, one might expect learned representations to mimic the true generative process, and, critically, when it should not.
For this reason, we actively avoid including any notion of a true generative process in our definition of decomposition, but note that, analogously to disentanglement, it permits such extension in scenarios where doing so can be shown to be appropriate.

%%% Local Variables:
%%% mode: latex
%%% TeX-master: "../main"
%%% End:

% LocalWords:  latents VAEs stochasticity ELBO datapoints

%\vspace*{-1ex}
% !TEX root =  main.tex
\section{Conclusions}
\label{sec:conc}
\vspace*{-1ex}

In this work, we explored and analysed the fundamental characteristics of learning disentangled representations, and showed how these can be generalised to a more general framework of \emph{decomposition}~\citep{lipton2016mythos}.
We characterised the learning of decomposed latent representation with \glspl{VAE} in terms of the control of two factors:
\begin{inparaenum}[i)]
	\item overlap in the latent space between encodings of different datapoints, and
	\item regularisation of the aggregate encoding distribution to the given prior, which encodes the structure one would wish for the latent space to have.
\end{inparaenum}

Connecting prior work on disentanglement to this framework, we analysed the \acrshort{BVAE} objective to show that its contribution to disentangling is primarily through direct control of the level of overlap between encodings of the data, expressed by maximising the entropy of the encoding distribution.
In the commonly encountered case of assuming an isotropic Gaussian prior and an independent Gaussian posterior, we showed that control of overlap is the \emph{only} effect of the \acrshort{BVAE}.
Motivated by this observation, we developed an alternate objective for the \gls{ELBO} that allows control of the two factors of decomposability through an additional regularisation term.
We then conducted empirical evaluations using this objective, targeting alternate forms of decompositions such as clustering and sparsity, and observed the effect of varying the extent of regularisation to the prior on the quality of the resulting clustering and sparseness of the learnt embeddings.
The results indicate that we were successful in attaining those  decompositions.
\vspace*{-1ex}

%%% Local Variables:
%%% mode: latex
%%% TeX-master: "../main"
%%% End:

% LocalWords:  VAE datapoints BVAE ELBO

\section*{Acknowledgements}
EM, TR, YWT were supported in part by the European Research Council under the European Union's Seventh Framework Programme (FP7/2007--2013) / ERC grant agreement no. 617071. TR research leading to these results also received funding from EPSRC under grant EP/P026753/1. EM was also supported by Microsoft Research through its PhD Scholarship Programme. NS was funded by EPSRC/MURI grant EP/N019474/1.

\bibliographystyle{plainnat}
\bibliography{references}
\clearpage
\newpage

\appendix
% !TEX root =  main.tex

\section{Proofs for Disentangling the \acrshort{BVAE}}
\label{sec:bvae-thm}

\betaThe*
\vspace*{-3ex}
\begin{proof}
	Starting with~\cref{eq:beta-vae}, we have
	\begin{align*}
	\L_{\beta}(\x)
	=& \Ex[\q{\z \given \x}]{\log \p{\x \given \z}} + \beta H_{q_{\phi}} \\
	&+ \beta \Ex[\q{\z \given \x}]{\log \pz} \\
	=& \Ex[\q{\z \given \x}]{\log \p{\x \given \z}}
	+ (\beta - 1) H_{q_{\phi}} + H_{q_{\phi}} \\
	&+ \Ex[\q{\z \given \x}]{\log \pz^\beta - \log \Fz}
	+ \log \Fz \\
	=& \Ex[\q{\z \given \x}]{\log \p{\x \given \z}} + (\beta - 1) H_{q_{\phi}} \\
	&- \KL{\q{\z \given \x}}{\fz} + \log \Fz \\
	=& \L\left(\x ; \pi_{\theta,\beta},q_{\phi} \right)
	+ (\beta - 1) H_{q_{\phi}}
	+ \log \Fz
	\end{align*}
	as required.
\end{proof}

\gauss*
\vspace*{-2ex}
\begin{proof}
  We start by noting that
  \begin{align*}
    \gn{\x}
    &= \Ex[\fz]{\p{\x \given \z}}
     = \Ex[\pz]{\p{\x \given \z/\sqrt{\beta}}} \\
     &= \Ex[\pz]{p_{\theta'}\mleft(\x \given \z \mright)}
     = p_{\theta'}(\x)
  \end{align*}
  Now considering an alternate form of \(\mathcal{L}\left(\x ; \pi_{\theta,\beta},q_{\phi} \right)\) in~\cref{eq:bvae},
  \begin{align*}
    \mathcal{L}&\left(\x ; \pi_{\theta,\beta},q_{\phi} \right) \\
    =& \log \gn{\x} - \KL{\q{\z \given \x}}{\gn{\z \given \x}}
    \displaybreak[0] \\
    =& \log p_{\theta'}(\x)
      - \Ex[\q{\z \given \x}]{%
      \log\left(\frac{\q{\z \given \x}p_{\theta'}(\x)}{\p{\x \given \z} \fz}\right)
      } \displaybreak[0] \\
            \numberthis
            \begin{split}
    =& \log p_{\theta'}(\x) \\
      &-\Ex[q_{\phi'}(\z \given \x)]{%
      \log\left(\frac{\q{\z/\sqrt{\beta} \given \x}p_{\theta'}(\x)}{p_{\theta}(\x \given \z / \sqrt{\beta}) f_{\beta}(\z/\sqrt{\beta})}\right)
      }.
      \end{split}
      \label{eq:corr1-int1}
  \end{align*}
  We first simplify \(f_{\beta}(\z/\sqrt{\beta})\) as
  \begin{align*}
    f_{\beta}(\z/\sqrt{\beta})
    &= \frac{1}{\sqrt{2\pi \dmt{\Sigma/\beta}}}
      \exp\left(-\frac{1}{2}\z^T \Sigma^{-1} \z\right) \\
    &= p(\z) \beta^{(D/2)}.
  \end{align*}
  Further, denoting \(\z_\dagger = \z-\sqrt{\beta}\mu_{\phi'}(\x)\), and \(\z_\ddagger = \z_\dagger / \sqrt{\beta} = \z/\sqrt{\beta} - \mu_{\phi'}(\x)\), we have
  \begin{align*}
    q_{\phi'}(\z \given \x)
    =& \frac{1}{\sqrt{2\pi \dmt{S_{\phi}(\x)\beta}}}
      \exp\left(-\frac{1}{2\beta}\z_\dagger^T S_{\phi}(\x)^{-1} \z_\dagger\right), \\
    \q{\frac{\z}{\sqrt{\beta}} \given \x}
    &= \frac{1}{\sqrt{2\pi \dmt{S_{\phi}(\x)}}}
      \exp\left(-\frac{1}{2}\z_\ddagger^T S_{\phi}(\x)^{-1} \z_\ddagger\right) \\
    \text{giving}\quad&
    \q{\z/\sqrt{\beta} \given \x}
    = q_{\phi'}(\z \given \x) \beta^{(D/2)}.
  \end{align*}
  Plugging these back in to~\cref{eq:corr1-int1} while remembering $p_{\theta}(\x \given \z / \sqrt{\beta}) = p_{\theta'}(\x \given \z)$, we have
  \begin{align*}
    \mathcal{L}&\left(\x ; \pi_{\theta,\beta},q_{\phi} \right)\\
    &= \log p_{\theta'}(\x)
    - \Ex[q_{\phi'}(\z \given \x)]{%
    \log\left(%
    \frac{q_{\phi'}(\z \given \x)p_{\theta'}(\x)}{p_{\theta'}(\x \given \z) p(\z)}
    \right)
    } \\
    &=\L\mleft(\x ; \theta,\phi \mright),
  \end{align*}
  showing that the ELBOs for the two setups are the same.
  For the entropy term, we note that
  \begin{align*}
    H_{q_{\phi}}
    &= \frac{D}{2}\left(1 + \log 2\pi \right) + \frac{1}{2}\log \dmt{S_{\phi}(\x)} \\
    &= \frac{D}{2}\left( 1 + \log \frac{2\pi}{\beta}\right) +\frac{1}{2}\log \dmt{S_{\phi'}(\x)}.
  \end{align*}
  Finally substituting for $H_{q_{\phi}}$ and $\mathcal{L}\left(\x ; \pi_{\theta,\beta},q_{\phi} \right)$
  in~\cref{eq:bvae} gives the desired result.
\end{proof}

\begin{restatable}{corollary}{equivalent}
  Let $[\theta',\phi']=g_{\beta}([\theta,\phi])$ represent the transformation required to produced the rescaled networks in Corollary~\ref{cor:gauss}.
  If $0<\left|\det{\nabla_{\theta,\phi} g([\theta,\phi])}\right|<\infty \,\, \forall [\theta, \phi]$, then
  \begin{gather*}
    \nabla_{\theta,\phi} \mathcal{L}_{\beta}(\x ; \theta, \phi) = \mathbf{0} \,\,
    \Leftrightarrow \,\, \nabla_{\theta',\phi'}
    \mathcal{L}_{H,\beta}\left(\x ; \theta',\phi' \right)
    = \mathbf{0}.
  \end{gather*}
  Thus $[\theta^*\!,\phi^*]$ being a stationary point of $\frac{1}{n}\!\sum_{i=1}^{n}\! \mathcal{L}_{\beta}(\x_i;\theta,\phi)$ indicates that $g_{\beta}([\theta^*,\phi^*])$ is a stationary point of $\frac{1}{n} \sum_{i=1}^{n}\mathcal{L}_{H,\beta}\left(\x_i ; \theta',\phi' \right)$ and vice-versa.
\end{restatable}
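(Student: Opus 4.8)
The plan is to show that $\mathcal{L}_{\beta}$ and $\mathcal{L}_{H,\beta}$ coincide, as functions on parameter space, up to precomposition with the reparametrisation $g_{\beta}$ and an additive constant, and then to transfer stationarity through the chain rule using the invertibility of the Jacobian of $g_{\beta}$.

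First I would assemble the exact relationship between the two objectives from \cref{cor:gauss}. That result states $\mathcal{L}_{\beta}(\x;\theta,\phi) = \mathcal{L}(\x;\theta',\phi') + \frac{\beta-1}{2}\log \dmt{S_{\phi'}(\x)} + c$ with $[\theta',\phi'] = g_{\beta}([\theta,\phi])$, and the first two terms on the right are, by the definition in~\eqref{eq:truth}, precisely $\mathcal{L}_{H,\beta}(\x;\theta',\phi')$. Hence
\begin{align*}
\mathcal{L}_{\beta}(\x;\theta,\phi) = \mathcal{L}_{H,\beta}\bigl(\x; g_{\beta}([\theta,\phi])\bigr) + c.
\end{align*}
Because $c$ is independent of $[\theta,\phi]$, averaging over the dataset leaves the relation intact: writing $\bar{\mathcal{L}}_{\beta}([\theta,\phi]) \triangleq \frac{1}{n}\sum_{i=1}^{n}\mathcal{L}_{\beta}(\x_i;\theta,\phi)$ and similarly for $\bar{\mathcal{L}}_{H,\beta}$, I would obtain $\bar{\mathcal{L}}_{\beta}([\theta,\phi]) = \bar{\mathcal{L}}_{H,\beta}\bigl(g_{\beta}([\theta,\phi])\bigr) + c$.

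Next I would differentiate this identity of functions on parameter space. Denoting the Jacobian $J \triangleq \nabla_{\theta,\phi}\, g_{\beta}([\theta,\phi])$, the chain rule gives
\begin{align*}
\nabla_{\theta,\phi}\, \bar{\mathcal{L}}_{\beta}([\theta,\phi]) = J^{T}\, \nabla_{\theta',\phi'} \bar{\mathcal{L}}_{H,\beta}([\theta',\phi']) \big|_{[\theta',\phi']=g_{\beta}([\theta,\phi])},
\end{align*}
the constant dropping out under differentiation. The hypothesis $0<\lvert\det J\rvert<\infty$ makes $J$, and hence $J^{T}$, an invertible linear map at every $[\theta,\phi]$, so $J^{T} v = \mathbf{0} \iff v = \mathbf{0}$. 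Applying this with $v = \nabla_{\theta',\phi'} \bar{\mathcal{L}}_{H,\beta}$ yields the claimed biconditional; since $g_{\beta}$ is then a bijection on the relevant parameter domain, it simultaneously establishes that $[\theta^{*},\phi^{*}]$ is a stationary point of $\bar{\mathcal{L}}_{\beta}$ iff $g_{\beta}([\theta^{*},\phi^{*}])$ is a stationary point of $\bar{\mathcal{L}}_{H,\beta}$.

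The hard part will not be the calculus but justifying that the composition-plus-constant identity holds as an equality of differentiable functions on parameter space, rather than merely numerically at isolated points, so that termwise differentiation is legitimate. This needs $g_{\beta}$ to be a genuine, smooth reparametrisation realising the rescaled decoder $\p{\x \given \z/\sqrt{\beta}}$ and encoder $(\sqrt{\beta}\mu_{\phi}, \beta S_{\phi})$ within the chosen architecture---exactly the content encoded by the finite, non-vanishing Jacobian assumption. That assumption is load-bearing: were $\det J$ permitted to vanish, $J^{T}$ could annihilate a nonzero gradient and a stationary point of one objective would not have to correspond to one of the other, so I would flag it explicitly as the condition under which the equivalence holds.
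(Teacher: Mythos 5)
Your proposal is correct and follows essentially the same route as the paper: both invoke \cref{cor:gauss} to express $\mathcal{L}_{\beta}$ as $\mathcal{L}_{H,\beta}$ precomposed with $g_{\beta}$ plus a constant, apply the chain rule, and use $0<\lvert\det \nabla_{\theta,\phi}\, g_{\beta}\rvert<\infty$ to make the Jacobian invertible so that the vanishing of one gradient is equivalent to the vanishing of the other. Your version is slightly more careful in spelling out that the constant drops out and in passing explicitly to the dataset-averaged objectives, but the substance is identical.
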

% \equivalent*
%
\begin{proof}
  Starting from \cref{cor:gauss} we have that
  \begin{align*}
    \nabla_{\theta,\phi} \L_{\beta}(\x ; \theta, \phi)
    &\!=\!\nabla_{\theta,\phi} \L_{H,\beta}\left(\x ; \theta',\phi' \right) \\
    &\!=\! \left(\nabla_{\theta,\phi} g_{\beta}([\theta,\phi])\right) \nabla_{\theta',\phi'}
      \L_{H,\beta}\left(\x ; \theta',\phi' \right),
  \end{align*}
  so\!
  $\nabla_{\theta',\phi'} \L_{H,\beta}\!\left(\x ; \theta',\phi' \right) \!=\! \mathbf{0}
  \!\!\!\implies\!\!\! \nabla_{\theta,\phi} \L_{\beta}\!(\x ; \theta, \phi) \!=\! \mathbf{0}$ given our assumption that $\left|\det{\nabla_{\theta,\phi} g([\theta,\phi])}\right| \!<\! \infty \,\forall [\theta, \phi]$.
  Further, as $0<\left|\det{\nabla_{\theta,\phi} g([\theta,\phi])}\right| \,\forall [\theta, \phi]$, $\left(\nabla_{\theta,\phi} g_{\beta}([\theta,\phi])\right)^{-1}$ exists and has a finite determinant, so $\nabla_{\theta,\phi} \mathcal{L}_{\beta}(\x ; \theta, \phi)=\mathbf{0}$ also implies $\nabla_{\theta',\phi'} \mathcal{L}_{H,\beta}\left(\x ; \theta',\phi' \right)=\mathbf{0}$.
\end{proof}

\rotate*
\begin{proof}
	If $\z \sim q_{\phi}(\z | \x)$ and $\y=R \z$ then, by \citet[\S 8.1.4][]{petersen2008matrix}), we have
	\begin{align*}
	\y &\sim \mathcal{N}(\y ; R \mu_{\phi}(\x), R S_{\phi}(\x) R^T). % \quad \text{and thus} \\
	%    \y &\sim \mathcal{N}(\y ; \mu_{\phi^\dag}(\x), S_{\phi^\dag}(\x)).
	\end{align*}
	Consequently, the changes made by the transformed networks cancel to give the same reconstruction
	error as
	\begin{align*}
	\mathbb{E}_{q_{\phi}(\z|\x)}[\log p_{\theta}(\x \given \z)]
	&=\mathbb{E}_{q_{\phi^\dag}(\z|\x)}[\log p_{\theta}(\x \given R^T\z)] \\
	&= \mathbb{E}_{q_{\phi^\dag}(\z|\x)}[\log p_{\theta^\dag}(\x \given \z)].
	\end{align*}
	Furthermore, the KL divergence between $\q{\z|\x}$ and $\p{\z}$ is invariant to rotation,
	because of the rotational symmetry of the latter, such that \(\KL{q_{{\phi}}(\z|\x)}{p(\z)} = \KL{q_{{\phi}^\dag}(\z|\x)}{p(\z)}\).
	The result now follows from noting that the two terms of the
	\acrshort{BVAE} are equal under rotation.
\end{proof}

\section{Experimental Details}
\label{sec:experimentaldetails}

\begin{table}[ht!]
  \centering
  \begin{subtable}[b]{\columnwidth}
    \centering
    \scalebox{.8}{%
    \begin{tabular}{l@{\hspace*{3ex}}l}
      \toprule
      \textbf{Encoder}                & \textbf{Decoder}                   \\
      \midrule
      Input 64 x 64 binary image      & Input $\in \mathbb{R}^{10}$        \\
      4x4 conv. 32 stride 2 \& ReLU   & FC. 128 ReLU                       \\
      4x4 conv. 32 stride 2 \& ReLU   & FC. 4x4 x 64 ReLU                  \\
      4x4 conv. 64 stride 2 \& ReLU   & 4x4 upconv. 64 stride 2 \& ReLU    \\
      4x4 conv. 64 stride 2 \& ReLU   & 4x4 upconv. 64 stride 2 \& ReLU    \\
      FC. 128                         & 4x4 upconv. 32 stride 2 \& ReLU    \\
      FC. 2x10                        & 4x4 upconv. 1. stride 2            \\
      \bottomrule
    \end{tabular}}
    \caption{2D-shapes dataset.}
    \label{tab:2dshapes}
  \end{subtable}\\[3ex]
  \begin{subtable}[b]{\columnwidth}
    \centering
    \scalebox{0.8}{%
    \begin{tabular}{l@{\hspace*{3ex}}l}
      \toprule
      \textbf{Encoder}            & \textbf{Decoder}         \\
      \midrule
      Input \(\in \mathbb{R}^2\)  & Input $\in \mathbb{R}^2$ \\
      FC. 100. \& ReLU            & FC. 100 \& ReLU          \\
      FC. 2x2                     & FC. 2x2                  \\
      \bottomrule
    \end{tabular}}
    \caption{Pinwheel dataset.}
    \label{tab:pinwheel}
  \end{subtable}\\[3ex]
  \begin{subtable}[b]{\columnwidth}
  \centering
  \scalebox{0.8}{%
  \begin{tabular}{l}
    \toprule
    \textbf{Encoder}                                       \\
    \midrule
    Input 32 x 32 x 1 channel image                       \\
    4x4 conv.  32 stride 2 \& BatchNorm2d \& LeakyReLU(.2) \\
    4x4 conv.  64 stride 2 \& BatchNorm2d \& LeakyReLU(.2) \\
    4x4 conv. 128 stride 2 \& BatchNorm2d \& LeakyReLU(.2) \\
    4x4 conv. 50, 4x4 conv. 50                             \\
    \bottomrule
    \toprule
     \textbf{Decoder}                                      \\
    \midrule
    Input $\in \mathbb{R}^{50}$                            \\
    4x4 upconv. 128 stride 1 pad 0 \& BatchNorm2d \& ReLU        \\
    4x4 upconv.  64 stride 2 pad 1 \& BatchNorm2d \& ReLU        \\
    4x4 upconv.  32 stride 2 pad 1 \& BatchNorm2d \& ReLU        \\
    4x4 upconv. 1 stride 2 pad 1                                \\
    \bottomrule
  \end{tabular}}
  \caption{Fashion-MNIST dataset.}
  \label{tab:celeba}
  \end{subtable}
  \caption{Encoder and decoder architectures.}
  \vspace*{-3ex}
\end{table}

\paragraph{Disentanglement - 2d-shapes:}
\begin{figure*}[t]
  \centering
  \includegraphics[width=0.24\linewidth,trim={3em 0em 3em 0},clip]{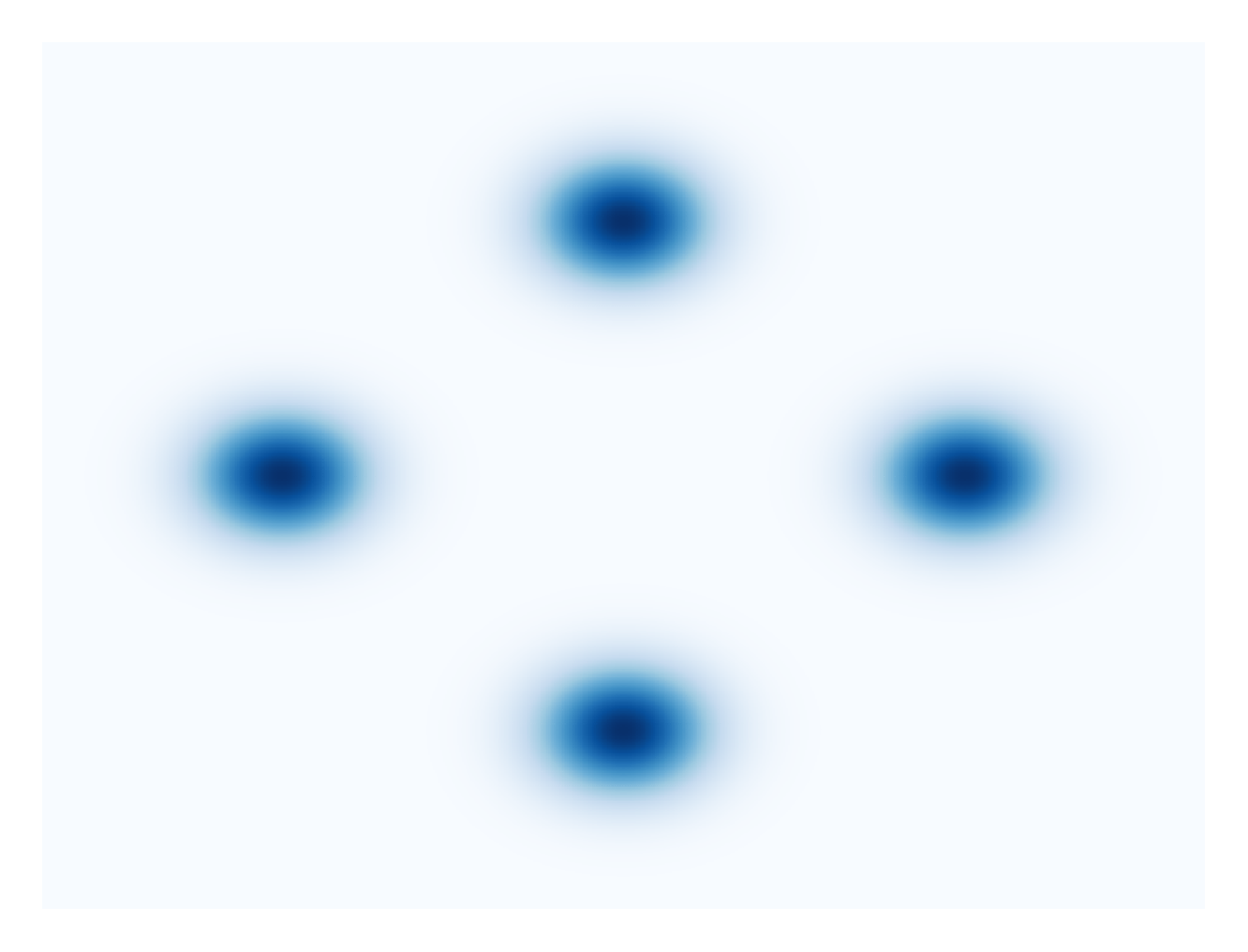}
  \quad
  \includegraphics[width=0.24\textwidth]{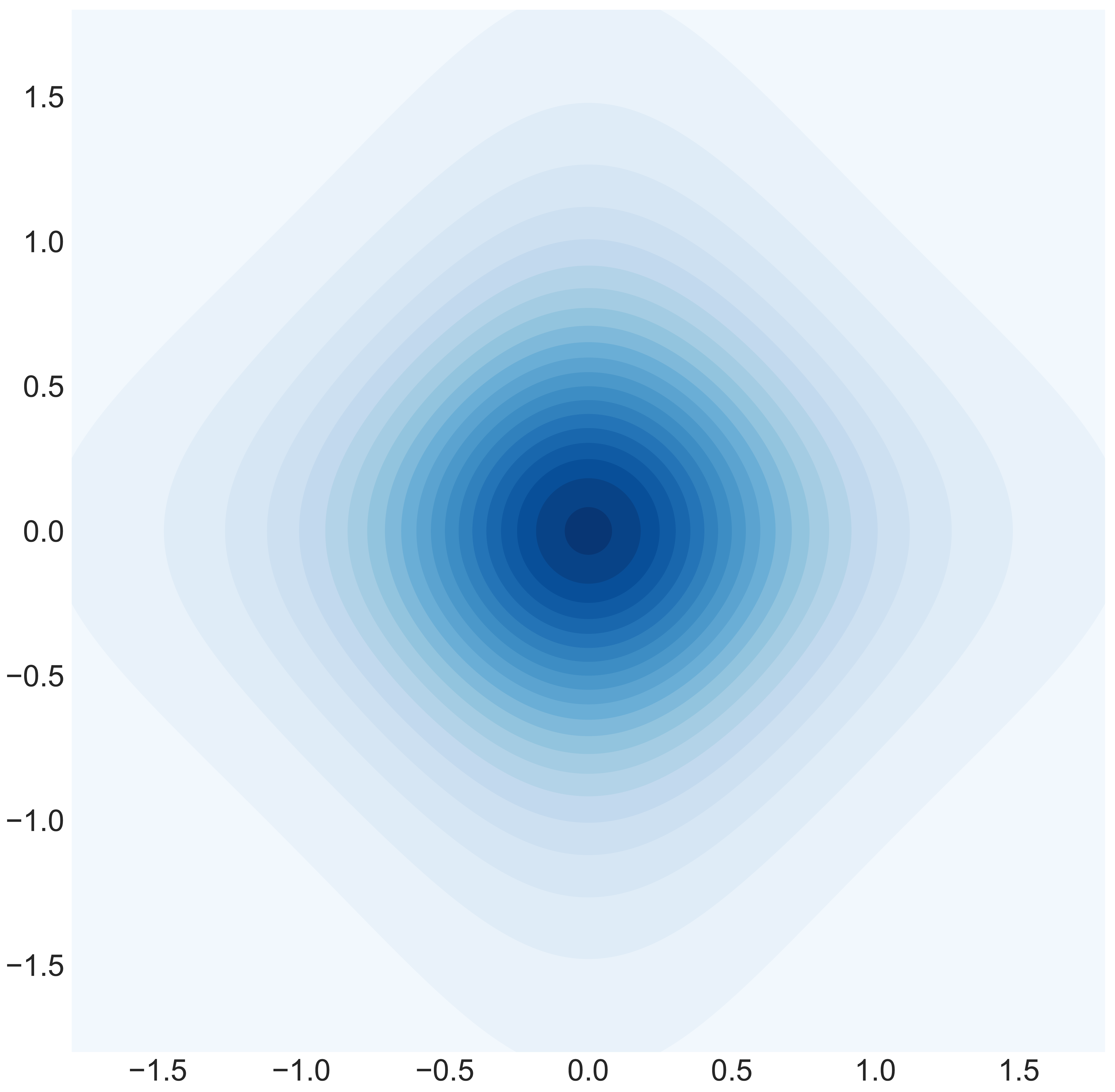}% \qquad
  \includegraphics[width=0.24\textwidth]{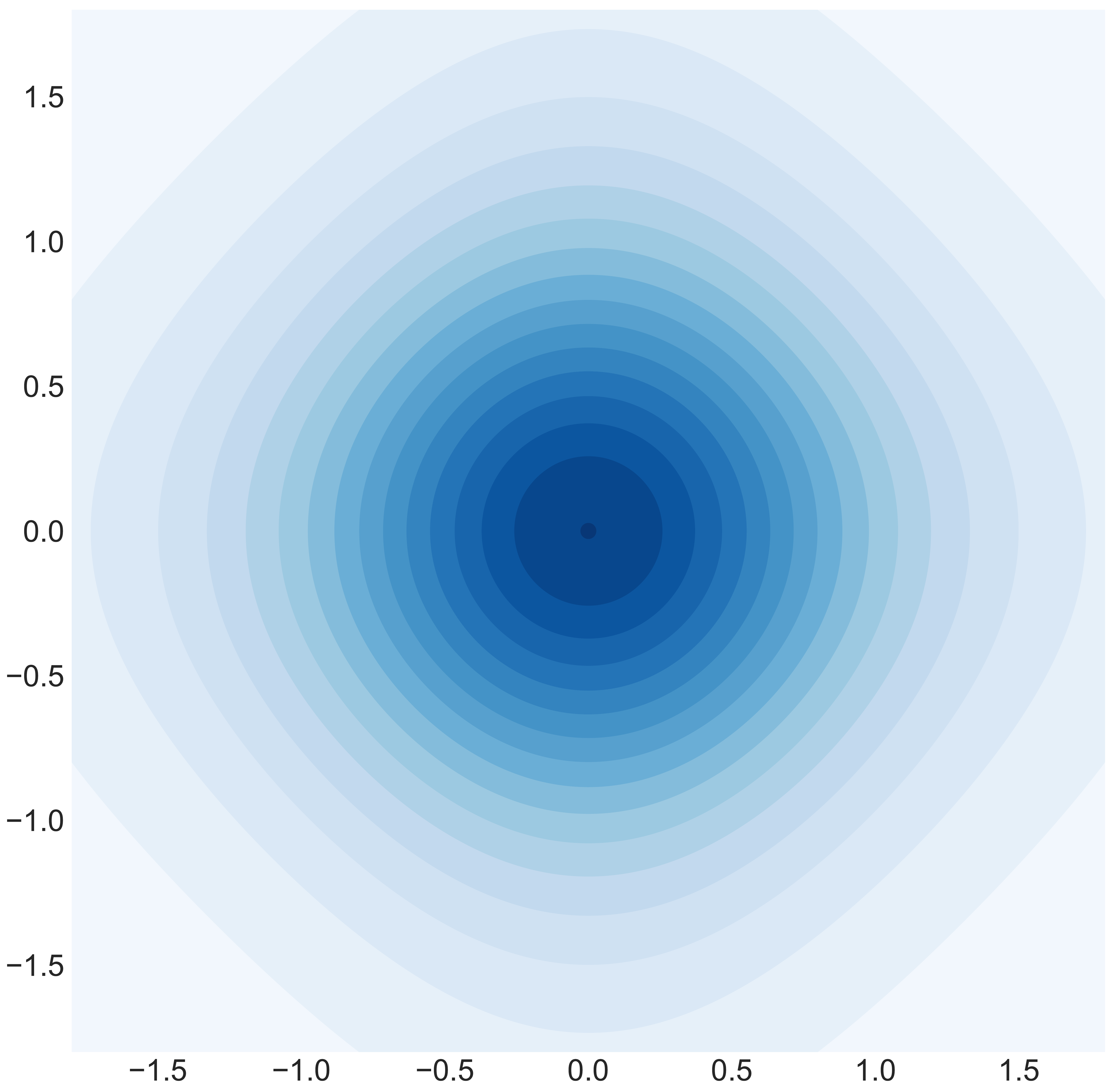}% \qquad
  \includegraphics[width=0.24\textwidth]{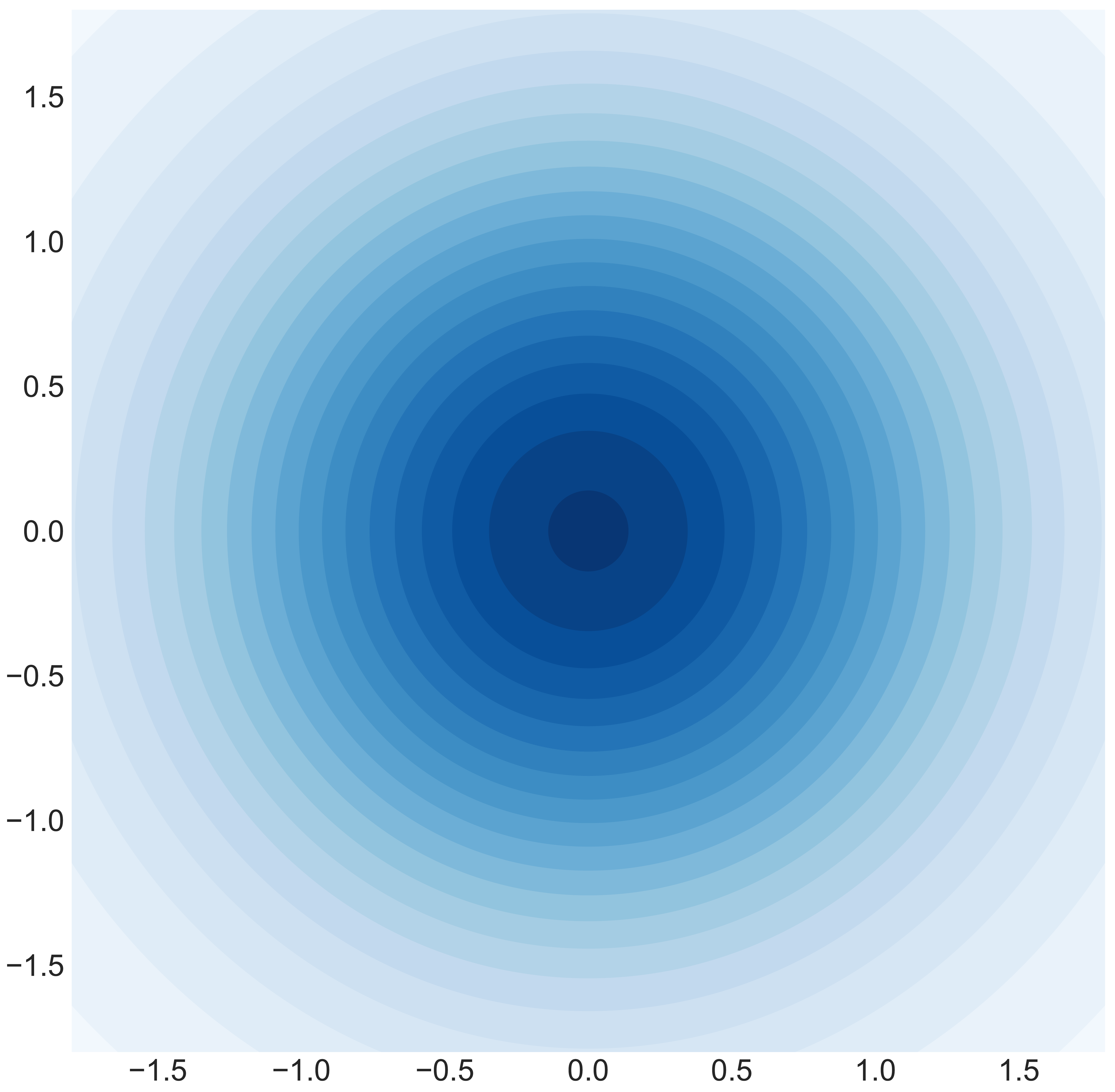}% \qquad
  \\
  (a) MoG \hspace*{40ex} (b) Student-t \hspace{25ex}
  % \vspace{-1.2em}
  \caption{%
    (a) PDF of Gaussian mixture model prior $p(\z)$, as per \cref{eq:mog_prior}.
    (b) PDF for a 2-dimensional factored Student-t distributions $p_{\nu}$ with degree of freedom $\nu=\{3, 5, 100\}$ (left to right).
    Note that $p_{\nu}(\z) \to \mathcal{N}(\z;\mathbf{0},\mathbf{I})$ as $\nu\to\infty$.
  }
  \label{fig:priors}
  \vspace{-2ex}
\end{figure*}
The experiments from Section \ref{sec:experiments} on the impact of the prior in terms disentanglement are conducted on the \textbf{2D Shapes} \citep{dsprites17} dataset, comprising of
737,280 binary 64 x 64 images of 2D shapes with ground truth factors [number of values]: shape [3], scale [6], orientation [40], x-position [32], y-position [32].
We use a convolutional neural network for the encoder and a deconvolutional neural network for the decoder, whose architectures are described in \cref{tab:2dshapes}.
We use $[0,1]$ normalised data as targets for the mean of a Bernoulli distribution and negative cross-entropy for $\log p(\x|\z)$.
We rely on the Adam optimiser \citep{Kingma:2014us,Reddi:2018wc} with learning rate $1e^{-4}$, $\beta_1$ = 0.9, and $\beta_2$ = 0.999, to optimise the $\beta$-VAE objective from \cref{eq:bvae}.

For $\pz = \mathcal{N}(\z;\mathbf{0},\text{diag}(\mathbf{\sigma}))$, experiments were run with a batch size of $64$ and for $20$ epochs.
For $\pz = \prod_{d} \textsc{Student-t}(z_d;\nu)$, experiments were run with a batch size of $256$ and for $40$ epochs.
In \cref{fig:disentenglement}, the \emph{PCA initialised anisotropic} prior is initialised so that its standard deviations are set to be the first~$D$ singular values of the data.
These are then mapped through a softmax function to ensure that the $\beta$ regularisation coefficient is not implicitly scaled compared to the isotropic case.
For the \emph{learned anisotropic} priors, standard deviations are first initialised as just described, and then learned along with the model through a log-variance parametrisation.

We rely on the metric presented in \S4 and Appendix B of \citet{Hyunjik2018} as a measure of axis-alignment of the latent encodings with respect to the true (known) generative factors.
Confidence intervals in \cref{fig:disentenglement} were computed via the assumption of normally distributed samples with unknown mean and variance, with $100$ runs of each model.

\paragraph{Clustering - Pinwheel}
We generated spiral cluster data\footnote{\url{http://hips.seas.harvard.edu/content/synthetic-pinwheel-data-matlab}.}, with $n=400$ observations, clustered in 4 spirals, with radial and tangential standard deviations respectively of $0.1$ and $0.30$, and a rate of $0.25$.
We use fully-connected neural networks for both the encoder and decoder, whose architectures are described in \cref{tab:pinwheel}.
We minimise the objective from \cref{eq:alpha_obj}, with $\mathbb{D}$ chosen to be the inclusive \acrshort{KL} and $\q{\z}$ approximated by the aggregate encoding of the full dataset:
\begin{align*}
  \mathbb{D}&\left(\q{\z}, p(\z) \right)= \text{KL}\left(p(\z) || \q{\z} \right) \displaybreak[0] \\
  &= \Ex[{p(\z)}]{\log(p(\z)) - \log\left( \Ex[{p_\mathcal{D}(\x)}]{\q{\z \given \x}} \right)} \displaybreak[0] \\
  &\approx \sum_{j=1}^B  \left( \log p(\z_j) - \log \left( \sum_{i=1}^{n} \q{\z_j \given \x_i} \right) \right)
\end{align*}
with $\z_j \sim p(\z)$.
A Gaussian likelihood is used for the encoder.
We trained the model for 500 epochs using the Adam optimiser \citep{Kingma:2014us,Reddi:2018wc}, with $\beta_1=0.9$ and $\beta_2=0.999$ and a learning rate of $1e^{-3}$.
The batch size is set to $B=n$.

The Gaussian mixture prior (c.f.\ \cref{fig:priors}(a)) is defined as
\begin{align}
  p(\z)
  &= \sum_{c=1}^{C} \pi^c~\mathcal{N}(\z|\bm{\mu}^c, \bm{\Sigma}^c) \nonumber \\
  \label{eq:mog_prior}
  &= \sum_{c=1}^{C} \pi^c \prod_{d=1}^{D} \mathcal{N}(\z_d|\mu_d^c, \sigma_d^c)
\end{align}
with $D=2, \, C=4, \, \bm{\Sigma}^c=0.03I_{D}, \, \pi^c=1/C$, and $\mu_d^c \!\in\! \{0,1\!\}$.

\paragraph{Sparsity - Fashion-MNIST}
The experiments from Section \ref{sec:experiments} on the latent representation's sparsity
are conducted on the \textbf{Fashion-MNIST} \citep{xiao2017/online} dataset, comprising of $70,000$ greyscale images resized to 32\(\times\)32.

To enforce sparsity, we relied on a prior defined as a factored univariate mixture of a standard and low variance normal distributions:
\[
  p(\z) = \prod\nolimits_{d} ~(1 - \gamma) ~\mathcal{N}(z_d;0, 1) + \gamma ~\mathcal{N}(z_d;0, \sigma_0^2)
\]
with $\sigma_0^2=0.05$.
%
% \begin{align*}
% p(\z) = \prod_{j} (1 - \gamma) \mathcal{N}(z^j;0, 1) + \gamma \mathcal{N}(z^j;0, \sigma_0^2),
% \end{align*}
% %
% with $\sigma_0 = 5e-2$.
%
The weight, $\gamma$, of the low-variance component indicates how likely samples are to come from that component, hence to be \emph{off}.

We minimised the objective from \cref{eq:alpha_obj}, with $\mathbb{D}(\q{\z}, p(\z))$ taken to be a dimension-wise \gls{MMD} with a sum of \emph{Cauchy} kernels on each dimension.
Equivalently, we can think of this as calculating a single \gls{MMD} using the single kernel
\begin{align}
k(\x, \y) = \sum_{d=1}^D \sum_{\ell}^L
\frac{\sigma_{\ell}}{\sigma_{\ell=1} + (x_d - y_d)^2}.
\end{align}
where $\sigma_{\ell} \in \{0.2, 0.4, 1, 2, 4, 10\}$ are a set of length scales.

This dimension-wise kernel only enforces a congruence between the marginal distributions of $\x$ and $\y$ and so, strictly speaking, its \gls{MMD} does not
constitute a valid divergence metric in the sense that we can have
$\mathbb{D}(\q{\z}, p(\z))=0$ when $\q{\z}$ and $p(\z)$ are not identical
distributions: it only requires their marginals to match to get zero divergence. 
%From a formal perspective, this occurs because the reproducing kernel Hilbert space of $k$ is not universal.

The reasons we chose this approach are twofold.
Firstly, we found that conventional kernels based on the Euclidean distance between encodings produced gradients with insurmountably high variances, meaning that effectively minimizing the divergence to get $\q{\z}$ and $\pz$ to match was not possible, even for very large batch sizes and $\alpha\to\infty$.

Secondly, though just matching the marginal distributions is not sufficient to ensure sparsity---as one could have some points with all dimensions close to the origin and some with all dimensions far away---a combination of the need to achieve good reconstructions and noise in the encoder process should prevent this from occurring.
In short, provided the noise from the encoder is properly regulated, there is little information that can be stored in latent dimensions near the origin because of the high level of overlap forced in this region.
Therefore, for a datapoint to be effectively encoded, it must have at least some of its latents dimensions outside of this region.
Coupled with the need for most of the latent values to be near the origin to match the marginal distributions, this, in turn, enforces a sparse representation.
Consequently, the loss in sparsity performance relative to using a hypothetical kernel that is both universal and has stable gradient estimates should only be relatively small, as is borne out in our empirical results.
This may, however, be why we see a slight drop in sparsity performance for very large values of $\alpha$.
% $\mathbb{D}\left(\q{\z}, p(\z) \right) = \text{MMD}_{k}\left(\q{\z}, p(\z) \right) $

We use a convolutional neural network for the encoder and a deconvolutional neural network for the decoder, whose architectures come from
the \acrshort{DCGAN} model \citep{DBLP:journals/corr/RadfordMC15} and are described in \cref{tab:celeba}.
We use $[0,1]$ normalised data as targets for the mean of a Laplace distribution with fixed scaling of $0.1$.
We rely on the Adam optimiser with learning rate $5e^{-4}$, $\beta_1$ = $0.5$, and $\beta_2$ = $0.999$. The model is then trained (on the training set) for $80$ epochs with a batch-size of $500$.

As an extrinsic measure of \emph{sparsity}, we use the \emph{Hoyer} metric \citep{Hurley2008ComparingMO}, defined for $\y \in \R^d$ by
\begin{align*}
\text{Hoyer}~(\y) = \frac{\sqrt{d} - \| \y \|_1 / \| \y \|_2}{\sqrt{d} - 1} \in [0, 1],
\end{align*}
yielding $0$ for a fully dense vector and $1$ for a fully sparse vector.
We additionally normalise each dimension to have a standard deviation of $1$ under its aggregate distribution, i.e. we use $\bar{z}_d = z_d / \sigma(z_d)$ where $\sigma(z_d)$ is the standard deviation of dimension $d$ of the latent encoding taken over the dataset.
Overall sparsity is computed by averaging over the dataset as \(\text{Sparsity} = 1/n \sum\nolimits_{i}^n \text{Hoyer}~(\bar{\z}_i)\).

As discussed in the main text, we use a trained model with \(\alpha=1000, \beta=1\), and \(\gamma=0.8\) to perform a qualitative analysis of sparsity using the \emph{Fashion-MNIST} dataset.
\Cref{fig:sparsity_all_labels} shows the per-class average embedding magnitude for this model, a subset of which was shown in the main text.
As can be seen clearly, the different classes utilise predominantly different subsets of dimensions to encode the image data, as one might expect for sparse representations.

\begin{figure*}[p]
  \centering
  \includegraphics[width=0.7\textwidth]{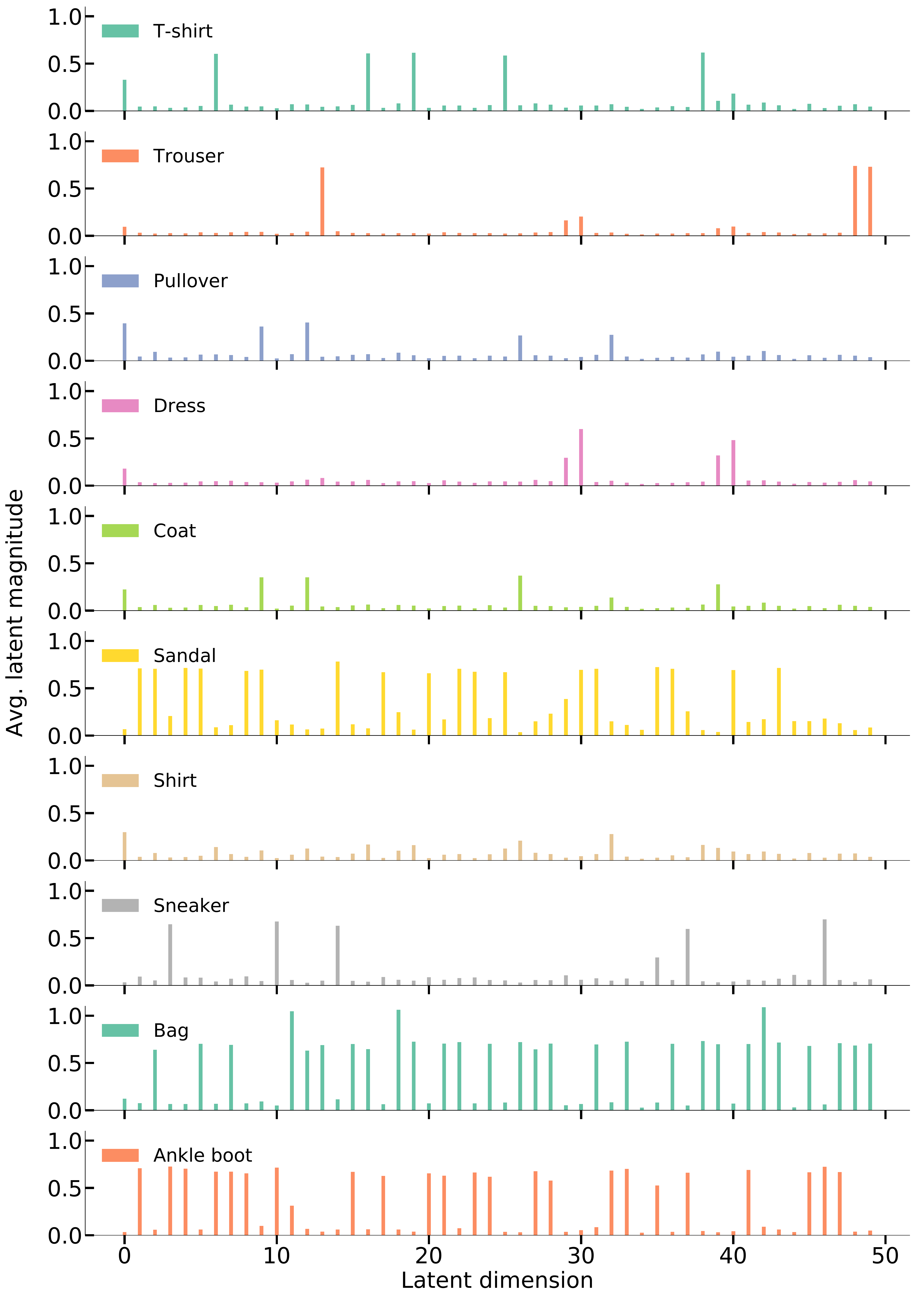}
  \vspace*{-1ex}
  \caption{%
  Average encoding magnitude over data for each classes in \emph{Fashion-MNIST}.
  }
  \vspace*{-4ex}
  \label{fig:sparsity_all_labels}
\end{figure*}

\section{Posterior regularisation}
\label{sec:posteriorreg}

The aggregate posterior regulariser $\mathbb{D}(q(\z), p(\z))$ is a little more subtle to analyse than the entropy regulariser as it involves both the choice of divergence and potential difficulties in estimating that divergence.
One possible choice is the exclusive \glsdesc{KL} divergence $\KL{q(\z)}{p(\z)}$, as previously used (without additional entropy regularisation) by~\citep{Esmaeili2018up,dilokthanakul2019explicit}, but also implicitly by~\citep{chen2018isolating}, through the use of a \gls{TC} term.
We now highlight a shortfall with this choice of divergence due to difficulties in its empirical estimation.

In short, the approaches used to estimate the $\text{H}[q(\z)]$ (noting that $\KL{q(\z)}{p(\z)} = -\text{H}[q(\z)] - \E_{q(\z)}[\log p(\z)]$, where the latter term can be estimated reliably by a simple Monte Carlo estimate) can exhibit very large biases unless very large batch sizes are used, resulting in quite different effects from what was intended.
In fact, our results suggest they will exhibit behaviour similar to the~\acrshort{BVAE} if the batch size is too small.
These biases arise from the effects of nesting estimators~\citep{rainforth2018nesting}, where the variance in the nested (inner) estimator for $q(\z)$ induces a bias in the overall estimator.
Specifically, for any random variable $\hat{Z}$,
\begin{align*}
  \E [\log (\hat{Z})]
  = \log(\E [\hat{Z}])-\frac{\text{Var} [\hat{Z}]}{2Z^2}+O(\varepsilon)
\end{align*}
where $O(\varepsilon)$ represents higher-order moments that get dominated asymptotically if $\hat{Z}$ is a Monte-Carlo estimator (see Proposition 1c in~\citet{maddison2017filtering},
Theorem 1 in~\citet{rainforth2018tighter}, or Theorem 3 in~\citet{domke2018importance}).
In this setting, $\hat{Z}=\hat{q}(\z)$ is the estimate used for $q(\z)$.
We thus see that if the variance of $\hat{q}(\z)$ is large, this will induce a significant bias in our \acrshort{KL} estimator.

To make things precise, we consider the estimator used for $\text{H}[q(\z)]$ in~\citet{Esmaeili2018up,chen2018isolating,dilokthanakul2019explicit}
\begin{subequations}
  \label{eq:naive}
  \begin{align}
    \hspace*{-1ex}%
    \text{H}[q(\z)]
    &\approx \hat{\text{H}}
      \triangleq -\frac{1}{B} \sum_{b=1}^{B}\log \hat{q}(\z_b), \text{\,where} \\
    \hat{q}(\z_b)
    &= \frac{\q{\z_b | \x_b}}{n}
      + \frac{n-1}{n(B-1)}\sum_{b'\neq b} \q{\z_b | \x_b'},
    \label{eq:hfvae-hatq}
  \end{align}
\end{subequations}
$\z_b \sim \q{\z | \x_b}$, and $\{\x_1,\dots,\x_B\}$ is the mini-batch of data used for the current iteration for dataset size $n$.
\citet{Esmaeili2018up} correctly show that $\E [\hat{q}(\z_b)] = \tilde{q}(\z_b)$, with the first term of~\cref{eq:hfvae-hatq} comprising an exact term in $\tilde{q}(\z_b)$ and the second term of~\cref{eq:hfvae-hatq} being an unbiased Monte-Carlo estimate for the remaining terms in $\tilde{q}(\z_b)$.

To examine the practical behaviour of this estimator when $B\ll n$, we first note that the second term
of~\cref{eq:hfvae-hatq} is, in practice, usually very small and dominated by the first term.
This is borne out empirically in our own experiments, and also noted in~\citet{Hyunjik2018}.
To see why this is the case, consider that given encodings of two independent data points, it is highly unlikely that the two encoding distributions will have any notable overlap (e.g. for a Gaussian encoder, the means will most likely be very many standard deviations apart), presuming a sensible latent space is being learned.
Consequently, even though this second term is unbiased and may have an expectation comparable or even larger than the first, it is heavily skewed---it is usually negligible, but occasionally large in the rare instances where there is substantial overlap between encodings.

Let the second term of \cref{eq:hfvae-hatq} be $T_2$ and the event that this it is significant be $E_S$, such that $\Ex{T_2 \given \neg E_s} \approx 0$.
As explained above, $\mathbb{P}(E_S) \ll 1$ typically.
We now have
\begin{align*}
  &\Ex{\hat{\text{H}}}\\
  &= \mathbb{P}(E_S) \Ex{\hat{\text{H}} \given E_S}
    + (1-\mathbb{P}(E_S)) \Ex{\hat{\text{H}} \given \neg E_S} \\
  &= \mathbb{P}(E_S) \Ex{\hat{\text{H}} \given E_S} +(1-\mathbb{P}(E_S)) \\
    &\quad\,\cdot\big(\!
    \log n
    \!-\!{\textstyle\frac{1}{B}\!\sum_{b=1}^{B}} \Ex{\log \q{\z_b | \x_b} | \neg E_S}
    \!-\! \Ex{T_2 | \neg E_S}
    \big) \\
  &= \mathbb{P}(E_S) \Ex{\hat{\text{H}} \given E_S}+(1-\mathbb{P}(E_S))\\
    &\quad\,\cdot\left(
    \log n
    \!-\! \Ex{\log \q{\z_1 | \x_1} | \neg E_S}
    \!-\! \Ex{T_2 | \neg E_S}
    \right) \\
  &\approx  \mathbb{P}(E_S) \Ex{\hat{\text{H}} \given E_S}\\
    &\quad+(1-\mathbb{P}(E_S))\! \left(\log n - \Ex{\log \q{\z_1 | \x_1}}\right)
\end{align*}
where the approximation relies firstly on our previous assumption that $\Ex{T_2 \given \neg E_S}\approx0$ and also that $\Ex{\log \q{\z_1|\x_1} \given \neg E_S}\approx \Ex{\log \q{\z_1|\x_1}}$.
This second assumption will also generally hold in practice, firstly because the occurrence of $E_S$ is dominated by whether two similar datapoints are drawn (rather than by the value of $\x_1$) and secondly because $\mathbb{P}(E_S)\ll1$ implies that
\begin{align*}
  &\Ex{\log \q{\z_1 \given \x_1}}\\
  &\quad= (1-\mathbb{P}(E_S))\Ex{\log \q{\z_1 \given \x_1} \given \neg E_S}\\
    &\quad\quad+\mathbb{P}(E_S)\Ex{\log \q{\z_1 \given \x_1} \given E_S}\\
  &\quad\approx\Ex{\log \q{\z_1 \given \x_1} \given \neg E_S}.
\end{align*}

Characterising $\Ex{\hat{\text{H}} \given E_S}$ precisely is a little more challenging, but
it can safely be assumed to be smaller than $\Ex{\log \q{\z_1 \given \x_1}}$, which is approximately what would result from all the $\x_b'$ being the same as $\x_b$.
We thus see that even when the event $E_S$ does occur, the resulting estimates will still, at most, be on a comparable scale to when it does not.
Consequently, whenever $E_S$ is rare, the $(1-\mathbb{P}(E_S)) \Ex{\hat{\text{H}} \given \neg E_S}$ term will dominate and we thus have
\begin{align*}
  \Ex{\hat{\text{H}}}
  &\approx \log n - \Ex{\log \q{\z_1 \given \x_1}}\\
  &= \log n + \Ex[p(\x)]{\text{H}[\q{\z \given \x}]}.
\end{align*}
We now see that the estimator mimics the $\beta-$VAE regularisation up to a constant factor $\log n$, as adding the $\E_{q(\z)}[\log p(\z)]$ back in gives
\begin{align*}
  -&\Ex{\hat{\text{H}}} - \E_{q(\z)}[\log p(\z)]\\
  &\approx \Ex[p(\x)]{\KL{\q{\z | \x}}{p(\z)}}-\log n .
\end{align*}
We should thus expect to empirically see training with this estimator as a regulariser to behave similarly to the $\beta-$VAE with the same regularisation term whenever $B\ll n$.
Note that the $\log n$ constant factor will not impact the gradients, but does mean that it is possible, even likely, that negative estimates for $\hat{\text{KL}}$ will be generated, even though we know the true value is positive.

Overcoming the problem can, at least to a certain degree, be overcome by using very large batch sizes $B$, at an inevitable computational and memory cost.
However, the problem is potentially exacerbated in higher dimensional latent spaces and larger datasets, for which one would typically expect the typical overlap of datapoints to decrease.

\subsection{Other Divergences}

As discussed in the main paper, $\KL{q(\z)}{p(\z)}$ is far from the only aggregate posterior regulariser one might use.
Though we do not analyse them formally, we expect many alternative divergence-estimator pairs to suffer from similar issues.
For example, using Monte Carlo estimators with the inclusive Kullback-Leibler divergence $\KL{p(\z)}{q(\z)}$ or the sliced Wasserstein distance~\citep{Kolouri:2018vo} both result in nested expectations analogously to $\KL{q(\z)}{p(\z)}$, and are therefore likely to similarly induce substantial bias without using large batch sizes.

Interestingly, however, \gls{MMD} and \gls{GAN} regularisers of the form discussed in~\citep{Tolstikhin:2017wy} do not result in nested expectations and therefore are necessarily not prone to the same issues: they produce unbiased estimates of their respective objectives.
Though we experienced practical issues in successfully implementing both of these---we found the signal-to-noise-ratio of the \gls{MMD} gradient estimates to be very low, particularly in high dimensions, while we experienced training instabilities for the \gls{GAN} regulariser---their apparent theoretical advantages may indicate that they are preferable approaches, particularly if these issues can be alleviated.
The \gls{GAN}-based approach to estimating the total correlation introduced by~\citet{Hyunjik2018} similarly allows a nested expectation to be avoided, at the cost of converting a conventional optimization into a minimax problem.

Given the failings of the available existing approaches, we believe that further investigation into divergence-estimator pairs for $\mathbb{D}(q(\z), p(\z))$ in \glspl{VAE} is an important topic for future work that extends well beyond the context of this paper, or even the general aim of achieving decomposition.
In particular, the need for congruence between the posterior (encoder), likelihood (decoder), and marginal likelihood (data distribution) for a generative model, means that ensuring $q(\z)$ is close to $p(\z)$ is a generally important endeavour for training \glspl{VAE}.
For example, mismatch between $q(\z)$ and $p(\z)$ will cause samples drawn from the learned generative model to mismatch the true data-generating distribution, regardless of the fidelity of our encoder and decoder.

\section{Characterising Overlap}
\label{sec:app-overlap}

Reiterating the argument from the main text, although the mutual information~\(I(\x;\z)\) between data and latents provides a perfectly serviceable characterisation of overlap in a number of cases, the two are not universally equivalent and we argue that it is overlap which is important in achieving useful representations.
In particular, if the form of the encoding distribution is not fixed---as when employing normalising flows, for example---\(I(\x;\z)\) does not necessarily characterise overlap well.

Consider, for example, an encoding distribution that is a mixture between the prior and a uniform distribution on a tiny \(\epsilon\)-ball around the mean encoding~\(\mu_{\phi}(x)\), i.e.
\(
\q{\z|\x}
\!=\! \lambda \cdot \text{Uniform}\left(\lVert\mu_{\phi}(\x)-\z\rVert_2 < \epsilon\right)
+ (1-\lambda) \cdot p(\z).
\)
If the encoder and decoder are sufficiently flexible to learn arbitrary representations, one now could arrive at \emph{any} value for mutual information simply by an appropriate choice of~\(\lambda\).
However, enforcing structuring of the latent space will be effectively impossible due to the lack of any pressure (other than a potentially small amount from internal regularization in the encoder network itself) for similar encodings to correspond to similar datapoints; the overlap between any two encodings is the same unless they are within $\epsilon$ of each other.

While this example is a bit contrived, it highlights a key feature of overlap that~\(I(\x;\z)\) fails to capture:~\(I(\x;\z)\) does not distinguish between large overlap with a small number of other datapoints and small overlap with a large number of other datapoints.
This distinction is important because we are particularly interested in \emph{how many} other datapoints one datapoint's encoding overlaps with when imposing structure---the example setup fails because each datapoint has the same level of overlap with all the other datapoints.

Another feature that~\(I(\x;\z)\) can fail to account for is a notion of \emph{locality} in the latent space.
Imagine a scenario where the encoding distributions are extremely multimodal with similar sized modes spread throughout the latent space, such as $q(\z|\x) = \sum_{i=1}^{1000} \mathcal{N}(\z ; \mu_{\phi}(\x)+m_i,\sigma I)$ for some constant scalar $\sigma$, and vectors $m_i$.
Again we can achieve almost any value for~\(I(\x;\z)\) by adjusting $\sigma$, but it is difficult to impose meaningful structure regardless as each datapoint can be encoded to many different regions of the latent space.

%%% Local Variables:
%%% mode: latex
%%% TeX-master: "main"
%%% End:

% LocalWords:  BVAE ELBO Gaussians VAE regulariser ELBOs conv ReLU FC upconv 2d
% LocalWords:  2x10 2x2 Pinwheel PCA softmax priors na ve 1c 2x10 2x2 LeakyReLU
% LocalWords:  BatchNorm2d CelebA 2d MMD multiquadratics DCGAN Hoyer 1c 2x10 2d
% LocalWords:  datapoints 2x2 MNIST MoG datapoint latents 1c Wasserstein GAN
% LocalWords:  regularisers datapoint's

\end{document}